\newcommand{\mmd}{\text{MMD}}
\newtheorem{definition}{\textbf{Definition}}
\newtheorem{corollary}{\textbf{Corollary}}
\newtheorem{lemma}{\textbf{Lemma}}
\newtheorem{theorem}{\textbf{Theorem}}
\newtheorem{proposition}{\textbf{Proposition}}
\newcommand{\nn}{\nonumber}
\newcommand{\mE}{\mathbb{E}}
\newcommand{\cH}{\mathcal{H}}
\newcommand{\cP}{\mathcal{P}}
\newcommand{\cI}{\mathcal{I}}
\DeclareMathAlphabet{\matheuf}{U}{euf}{m}{n}
\begin{document}

\vspace*{-2cm}

\begin{center}
  \baselineskip 1.3ex {\Large \bf Nonparametric Detection of Geometric Structures over Networks
\let\thefootnote\relax\footnote{The material in this paper was presented
in part at IEEE International Workshop on Machine Learning for Signal Processing (MLSP), Reims, France, September, 2014\cite{zou2014kernel}.
}
\let\thefootnote\relax\footnote{The work of S. Zou and Y. Liang was supported by a National Science Foundation
CAREER Award under Grant CCF-10-26565. The work of H. V. Poor was supported by National Science Foundation under Grants DMS-11-18605 and ECCS-13-43210.}
}
\\
 \vspace{0.15in} Shaofeng Zou, Yingbin Liang, H. Vincent Poor
\let\thefootnote\relax\footnote{Shaofeng Zou and Yingbin Liang are with the Department of Electrical
Engineering and Computer Science, Syracuse University, Syracuse, NY 13244 USA (email: \{szou02,yliang06\}@syr.edu).
H. Vincent Poor is with the Department of Electrical Engineering, Princeton University, Princeton, NJ 08544 USA (email: poor@princeton.edu).
}
\end{center}

\begin{abstract}
\baselineskip 3.5ex
Nonparametric detection of existence of an anomalous structure over a network is investigated. Nodes corresponding to the anomalous structure (if one exists) receive samples generated by a distribution $q$, which is different from a distribution $p$ generating  samples for other nodes. If an anomalous structure does not exist, all nodes receive samples generated by $p$. It is assumed that the distributions $p$ and $q$ are arbitrary and unknown. The goal is to design statistically consistent tests with probability of errors converging to zero as the network size becomes asymptotically large. Kernel-based tests are proposed based on maximum mean discrepancy that measures the distance between mean embeddings of distributions into a reproducing kernel Hilbert space. Detection of an anomalous interval over a line network is first studied. Sufficient conditions on minimum and maximum sizes of candidate anomalous intervals are characterized in order to guarantee the proposed test to be consistent. It is also shown that certain necessary conditions must hold to guarantee any test to be universally consistent. Comparison of sufficient and necessary conditions yields that the proposed test is order-level optimal and nearly optimal respectively in terms of minimum and maximum sizes of candidate anomalous intervals. Generalization of the results to other networks is further developed. Numerical results are provided to demonstrate the performance of the proposed tests.
\end{abstract}
{\bf Key words:} Consistency, maximum mean discrepancy, minimax risk, nonparametric test, reproducing kernel Hilbert space.

%
%
\section{Introduction}\label{sec:introduction}

We are interested in a type of problems, the goal of which is to detect existence of an anomalous structure over a network. Each node in the network observes a random sample. An anomalous structure, if it exists, corresponds to a cluster of nodes in the network that take samples generated by a distribution $q$. All other nodes in the network take samples generated by a distribution $p$ that is different from $q$. If there does not exist an anomalous structure, then all nodes receive samples generated by $p$. The distributions $p$ and $q$ are \emph{arbitrary} and \emph{unknown a priori}. Designed tests are required to distinguish between the null hypothesis (i.e., no anomalous structure exists) and the alternative hypothesis (i.e., there exists an anomalous structure). Due to the fact that the anomalous structure may be one of a number of candidate structures in the network, this is a composite hypothesis testing problem. In this paper, we first study the problem of detecting existence of an anomalous interval over a line network, and then generalize our approach to higher dimensional networks.


Such a problem models a variety of applications. For example, in sensor networks, sensors are deployed over a large range of space. These sensors take measurements from the environment in order to determine whether or not there is intrusion of an anomalous object. Such intrusion typically activates only a few sensors that cover a certain geometric area. An alarm is then triggered if the network detects an occurrence of intrusion based on the sensors' measurements. Other applications can arise in detecting an anomalous segment of DNA sequences, detecting virus infection of computer networks, and detecting anomalous spot in images.

As an interesting topic, detecting existence of an anomalous geometric structure in networks has been intensively studied in the literature as we review in Subsection \ref{sec:relatedwork}. However, previous studies focused on \emph{parametric} or \emph{semiparametric} models, which assume that samples are generated by known distributions such as Gaussian or Bernoulli distributions, or the two distributions are known to have mean shift. Such parametric models may not always hold in real applications. In many cases, distributions can be arbitrary, and may not be Gaussian or Bernoulli. They may not differ in mean either. The distributions may not even be known in advance. Hence, it is desirable to develop nonparametric tests that are universally applicable to arbitrary distributions.

In contrast to previous studies, we study the \emph{nonparametric} problem of detecting an anomalous structure, in which distributions can be \emph{arbitrary} and \emph{unknown a priori}. In order to deal with nonparametric models, we apply mean embedding of distributions into a reproducing kernel Hilbert space (RKHS) \cite{Berl2004,Srip2010} (see \cite{Scho2002} for an introduction of RKHS). The idea is to map probability distributions into an RKHS associated with an appropriate kernel such that distinguishing between two probability distributions can be carried out by evaluating the distance between the corresponding mean embeddings in the RKHS. This is valid because the mapping is shown to be injective for various kernels \cite{Fuku2008} such as Gaussian and Laplacian kernels. The main advantage of such an approach is that the mean embedding of a distribution can be easily estimated based on samples. This approach has been applied to solving the two sample problem in \cite{Gretton2012}, in which the quantity of {\em maximum mean discrepancy (MMD)} was used as a metric of  distance between mean embeddings of two distributions. In this paper, we apply MMD as a metric to construct tests for the nonparametric detection problem of interest.


We are interested in the asymptotic scenario in which the network size goes to infinity and the number of candidate anomalous structures scales with the network size. Thus, the number of sub-hypotheses under the alternative hypothesis also increases, which causes the composite hypothesis testing problem to be difficult. On the other hand, since the distributions can be arbitrary, it is in general difficult to exploit properties of the distributions such as mean shift to detect existence of an anomalous structure. Furthermore, as the network size becomes large, in contrast to parametric models in which the mean shift can scale with the network size, here it is necessary that the numbers of samples within and outside of each anomalous structure should scale with the network size fast enough in order to provide more accurate information about both distributions $p$ and $q$ and guarantee asymptotically small probability of error. Thus, the problem amounts to characterize how the minimum and maximum sizes of all candidate anomalous structures should scale with the network size in order to consistently detect the existence of an anomalous structure.

In this paper, we adopt the following notations to express asymptotic scaling of quantities with the network size $n$:
\begin{list}{$\bullet$}{\topsep=0.ex \leftmargin=0.3in \rightmargin=0.in \itemsep =-0.05in}
\item $f(n)=O(g(n))$: there exist $k,n_0>0$ s.t.\ for all $ n>n_0$, $|f(n)|\leq k|g(n)|$;
\item $f(n)=\Omega(g(n))$: there exist $k,n_0>0$ s.t.\ for all $ n>n_0$, $f(n)\geq kg(n)$;
\item $f(n)=\Theta(g(n))$: there exist $k_1,k_2,n_0>0$ s.t.\ for all $ n>n_0$, $k_1g(n)\leq f(n)\leq k_2g(n)$;
\item $f(n)=o(g(n))$: for all $k>0$, there exists $n_0>0$ s.t.\ for all $ n>n_0$, $|f(n)|\leq kg(n)$;
\item $f(n)=\omega(g(n))$: for all $k>0$, there exists $n_0>0$ s.t.\ for all $ n>n_0$, $|f(n)|\geq k|g(n)|$.
\end{list}

\subsection{Main Contributions}

We adopt the MMD to construct nonparametric tests for various networks, which is based on kernel embeddings of distributions into an RKHS. Our main contribution lies in comprehensive analysis of the performance guarantee for the proposed tests. For the problem of detecting an anomalous interval over a line network, we show that as the network size $n$ goes to infinity, if the minimum size $I_{\min}$ of candidate anomalous intervals satisfies $I_{\min}=\Omega(\log n)$, and the maximum size  $I_{\max}$ of candidate anomalous intervals satisfies $n-I_{\max}=\Omega(\underset{\text{arbitrary k number of }\log}{\underbrace{\log\cdots\cdots\log }n})$, then the proposed test is consistent, i.e., the probability of error is asymptotically small. We further derive necessary conditions on $I_{\min}$ and $I_{\max}$ that any test must satisfy in order to be universally consistent for arbitrary $p$ and $q$. Comparison of sufficient and necessary conditions yields that the MMD-based test is order-level optimal in terms of $I_{\min}$ and nearly order-level optimal in terms of $I_{\max}$. We further generalize such analysis to other networks and obtain similar type of results. Our results also demonstrate the impact of geometric structures on performance guarantee of tests.

Our technical analysis is very different from that for parametric problems. The obvious difference is due to significantly different approaches applied to the two types of problems. Furthermore, the nonparametric nature also affects the asymptotic formulation of the problem. The lower and upper bounds (such as $I_{\min}$ and $I_{\max}$ in line networks) on the sizes of all candidate anomalous structures must scale with the network size in order to guarantee enough samples in and outside the anomalous structure if it occurs. This is significantly different from parametric models where problems can still be well posed even with a single node or the entire network being anomalous, so long as a certain distribution parameter (such as mean shift between the two distributions) scales with the network size. Consequently, the asymptotic analysis for the nonparametric problem requires considerable new technical developments.

Although the kernel-based approach has been used to solve various machine learning problems, it is not widely applied to solving detection problems with only few exceptions such as the two sample problem \cite{Gretton2012}. Since the nature of our problem necessarily involves geometric structures in networks, the technical analysis requires substantial efforts to deal with scaling of the size of geometric structures and analyze the impact of geometry on consistency of tests, which are not captured in the two sample problem.

\subsection{Related Work}\label{sec:relatedwork}

%
%
%

Detecting existence of an anomalous geometric structure in large networks has been extensively studied in the literature. A number of studies focused on networks with nodes embedded in a lattice such as one dimensional line and square. In \cite{Arias2005}, the network is assumed to be embedded in a $d$-dimensional cube, and geometric structures such as line segments, disks, rectangles and ellipsoids associated with nonzero-mean Gaussian random variables need to be detected out of other nodes associated with zero-mean Gaussian noise variables. A multi-scale approach was proposed and its optimality was analyzed. In \cite{Walt2010}, detection of spatial clusters under the Bernoulli model over a two-dimensional space was studied, and a new calibration of the scan statistic was proposed, which results in optimal inference for spatial clusters. In \cite{Paci2004}, the problem of identifying a cluster of nodes with nonzero-mean values from zero-mean noise variables over a random field was studied.

Further generalization of the problem has also been studied, when network nodes are associated with a graph structure, and existence of an anomalous cluster or an anomalous subgraph of nodes needs to be detected. In \cite{Arias2008}, an unknown path corresponding to nonzero-mean variables needs to be detected out of zero-mean variables in a network with nodes connected in a graph. In \cite{Adda2010}, for various combinatorial and geometric structures of anomalous objects, conditions were established under which testing is possible or hopeless with a small risk. In \cite{Arias2011}, the cluster of anomalous nodes can either take certain geometric shapes or be connected as subgraphs. Such structures associated with nonzero-mean Gaussian variables need to be detected out of zero-mean variables. In \cite{Sharp2013a} and \cite{Sharp2013b}, network properties of anomalous structures such as small cut size were incorporated in order to assist successful detection. More recently, in \cite{qian2014}, the problem of detecting connected sub-graph with elevated mean out of zero-mean Gaussian random variables was studied. An algorithm  was proposed to characterize the family of all connected sub-graphs in terms of linear matrix inequalities. The minimax optimality of such an approach was further established in \cite{qian2014efficient} for exponential family on 1-D and 2-D lattices.

Our problem differs from all of the above studies due to its nonparametric nature, i.e., the distributions are assumed to be unknown and arbitrary.

\subsection{Organization of the Paper}

The rest of the paper is organized as follows. In Sections \ref{sec:model}, we describe the problem formulation in the context of a line network and introduce the MMD-based approach. In Section \ref{sec:line}, we present our results for line networks. In section \ref{sec:generalization}, we generalize our approach to other networks. In Section \ref{sec:num}, we provide numerical results, and finally in Section \ref{sec:con}, we conclude our paper with remarks on future work.

\section{Problem Statement and Preliminaries}\label{sec:model}


\subsection{Problem Statement}

In this subsection, we introduce our problem formulation in the context of line networks that we study in Section \ref{sec:line}. We describe generalization of this problem to other networks in Section \ref{sec:generalization} when we present the corresponding results for these networks.

\begin{figure}
\begin{center}
\includegraphics[width=6.5cm]{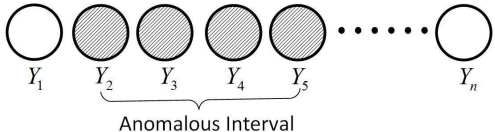}
\caption{A line network with an anomalous interval.}\label{fig:model}
\end{center}
\end{figure}

We consider a line network, which consists of nodes $1,\ldots,n$, as shown in Figure~\ref{fig:model}. We use $I$ to denote a subset of consecutive indices of nodes, which is referred to as an {\em interval}. Here, the length of an interval $I$ refers to the cardinality of $I$, and is denoted by $|I|$. We assume that any interval with the length between $I_{\min}$ and $I_{\max}$ can be a candidate anomalous interval, and collect all candidate anomalous intervals into the following set $\cI_n^{(a)}$
\begin{equation}\label{eq:Ia}
 \mathcal I_n^{(a)}=\{I:I_{\min}\leq |I|\leq I_{\max}\}.
\end{equation}
As we explain towards the end of this subsection, the two problem parameters $I_{\min}$ and $I_{\max}$ play an important role in determining whether the problem is well posed.


We assume that each node, say node $i$, is associated with a random variable, denoted by $Y_i$, for $i=1,\ldots,n$. We consider two hypotheses about the distributions of the line network. For the {\em null hypothesis} $H_0$, $Y_i$ for $i=1,\ldots,n$ are identical and independently distributed (i.i.d.) random variables, and are generated from a distribution $p$. For the {\em alternative hypothesis} $H_1$, there exists an interval $I \in \cI_n^{(a)} $ over which $Y_i$ (with $i \in I$) are i.i.d.\ and are generated from a distribution $q\neq p$, and otherwise, $Y_i$ are i.i.d.\ and generated from the distribution $p$. Thus, the alternative hypothesis is composite due to the fact that $\mathcal I_n^{(a)}$ contains multiple candidate anomalous intervals, and these intervals differentiate from each other by their length and location in the network. We further assume that under both hypotheses, each node generates only one sample. Putting the problem into a context, $H_0$ models the scenario when the observations $Y_i$ are background noise, and $H_1$ models the scenario when some $Y_i$ (for $i \in I$) are observations activated by an anomalous intrusion.

In contrast to previous work, we assume that the distributions $p$ and $q$ are {\em arbitrary} and {\em unknown a priori}.
For this problem, we are interested in the asymptotic scenario, in which the number of nodes goes to infinity, i.e., $n \rightarrow \infty$. The performance of a test for such a system is captured by two types of errors. The {\em type I error} refers to the event that samples are generated from the null hypothesis, but the detector determines an anomalous event occurs. We denote the probability of such an event as $P(H_1|H_0)$, or $P_{H_0}(error)$. The {\em type II error} refers to the case that an anomalous event occurs but the detector claims that samples are generated from the null hypothesis. We denote the probability of such an event as $P(H_0|H_1)$, or $P_{H_1}(error)$.
We define the following minimax risk to measure the performance of a test:
\begin{flalign}\label{eq:mmrisk}
  R_m^{(n)}=P(H_1|H_0)+\underset{I\in\mathcal I_n^{(a)}}{\max}P(H_0|H_{1,I}).
\end{flalign}
\begin{definition}\label{def:consistent}
  A test is said to be consistent if the minimax risk $R_m^{(n)}\rightarrow 0$, as $n\rightarrow \infty$.
\end{definition}


It can be seen from the definition of $\mathcal I_n^{(a)}$ that $I_{\min}$ and $I_{\max}$ determine the number of candidate anomalous intervals. Furthermore, if there exists an anomalous intervals, $I_{\min}$ determines the least number of samples generated by $q$ and $n-I_{\max}$ determines the least number of samples generated by $p$. As $n \rightarrow \infty$, to guarantee asymptotically small probability of error, both $I_{\min}$ and $I_{\max}$ must scale with $n$ to provide sufficient information about $p$ and $q$ in order to yield accurate distinction between the two hypotheses. This suggests that as the network becomes larger, only a large enough but not too large anomalous object can be detected. Therefore, our goal in this problem is to characterize how $I_{\min}$ and $I_{\max}$ should scale with the network size in order for a test to successfully distinguish between the two hypotheses. Such conditions on $I_{\min}$ and $I_{\max}$ can thus be interpreted as the resolution of the corresponding test.

\subsection{Preliminaries of MMD}
We provide brief introduction about the idea of mean embedding of distributions into RKHS \cite{Berl2004,Srip2010} and the metric of MMD. Suppose $\cP$ includes a class of probability distributions, and suppose $\cH$ is the RKHS associated with a kernel $k(\cdot,\cdot)$. We define a mapping from $\cP$ to $\cH$ such that each distribution $p\in \cP$ is mapped into an element in $\cH$ as follows
\[\mu_p(\cdot)=\mE_p [k(\cdot,x)]=\int k(\cdot,x)dp(x). \]
Here, $\mu_p(\cdot)$ is referred to as the {\em mean embedding} of the distribution $p$ into the Hilbert space $\cH$. Due to the reproducing property of $\cH$, it is clear that $\mE_p[f]=\langle \mu_p,f \rangle_{\cH}$ for all $f \in \cH$.

It is desirable that the embedding is {\em injective} such that each $p \in \cP$ is mapped to a unique element $\mu_p \in \cH$. It has been shown in \cite{Fuku2008,Srip2008,Fuku2009,Srip2010} that for many RKHSs such as those associated with Gaussian and Laplace kernels, the mean embedding is injective.
In this way, many machine learning problems with unknown distributions can be solved by studying mean embeddings of probability distributions without actually estimating the distributions, e.g., \cite{Song2013,Song2011a,Song2011b,Smola2007}.  In order to distinguish between two distributions $p$ and $q$, \cite{Gretton2012} introduced the following quantity of maximum mean discrepancy (MMD) based on the mean embeddings $\mu_p$ and $\mu_q$ of $p$ and $q$, respectively:
\begin{equation}
\mmd[p,q]:=\|\mu_p-\mu_q\|_{\cH}.
\end{equation}
It is also shown that
\[\mmd[p,q]=\sup_{\|f\|_{\cH} \leq 1} \mE_p[f(x)]-\mE_q[f(x)].\]
Namely, $\mmd[p,q]$ achieves the maximum of the mean difference of a function between the two distributions over all unit-norm functions in the RKHS $\cH$.

Due to the reproducing property of kernel, it can be easily shown that
\begin{flalign}\label{eq:mmdpq}
\mmd^2[p,q]=&\mE_{x,x'}[k(x,x')]-2\mE_{x,y}[k(x,y)] +\mE_{y,y'}[k(y,y')],
\end{flalign}
where $x$ and $x'$ are independent and  have the same distribution $p$, and $y$ and $y'$ are independent and have the same distribution $q$. An unbiased estimator of $\mmd^2[p,q]$ based on $n$ samples of $x$ and $m$ samples of $y$ is given by
\begin{flalign}\label{eq:mmdu}
 \mmd_u^2[X,Y]=&\frac{1}{n(n-1)}\sum_{i=1}^n\sum_{j\neq i}^n k(x_i,x_j) +\frac{1}{m(m-1)}\sum_{i=1}^m\sum_{j\neq i}^m k(y_i,y_j)\nn\\&-\frac{2}{nm}\sum_{i=1}^n\sum_{j=1}^m k(x_i,y_j),
\end{flalign}
where $X=[x_1,\ldots,x_n]$, and $Y=[y_1,\ldots,y_m]$.
We note that other estimators of   $\mmd^2[p,q]$ are also available, which can be used for our problem. In this paper, we focus on the unbiased estimator given above to convey the central idea.


\section{Line Network}\label{sec:line}

\subsection{Test and Performance}\label{sec:suff}

We construct a nonparametric test using the unbiased estimator in \eqref{eq:mmdu} and the scan statistics. For each interval $I$, let $Y_I$ denote the samples in the interval $I$, and $Y_{\bar I}$ denote the samples outside the interval $I$. We compute $\mmd^2_{u,I}(Y_I,Y_{\bar{I}})$ for all intervals $I\in \cI_n^{(a)}$. Under the null hypothesis $H_0$, all  samples are generated from the distribution $p$. Hence, for each $I\in \cI_n^{(a)}$, $\mmd^2_{u,I}(Y_I,Y_{\bar{I}})$
yields an estimate of $\mmd^2[p,p]$, which is zero. Under the alternative hypothesis $H_1$, there exists an anomalous interval $I^*$ in which the samples are generated from distribution $q$. Hence, $\mmd^2_{u,I^*}(Y_{I^*},Y_{\bar{I^*}})$ yields an estimate of $\mmd^2[p,q]$, which is bounded away from zero due to the fact that $p\neq q$. Based on the above understanding, we build the following test:
\begin{flalign}\label{eq:test}
\max_{I:I\in \cI_n^{(a)}} \mmd^2_{u,I}(Y_I,Y_{\bar{I}})
\begin{cases}
\ge t, \quad &\text{determine } H_1 \\
< t, &  \text{determine } H_0
\end{cases}
\end{flalign}
where $t$ is a threshold parameter. It is anticipated that with sufficiently accurate estimate of MMD and an appropriate choice of the threshold $t$, the test in \eqref{eq:test} should provide desired performance. The following theorem characterizes the performance of the proposed test.
\begin{theorem}\label{thm:achiv}
Suppose the test in \eqref{eq:test} is applied to the nonparametric problem described in Section \ref{sec:model}. Further assume that the kernel in the test satisfies $0\leq k(x,y)\leq K$ for all $(x,y)$. Then, the type I and type II errors are upper bounded respectively as follows:
  \begin{flalign}
    P(H_1|H_0)&\leq \sum_{I:I_{\min}\leq |I|\leq I_{\max}}\exp\left(-\frac{t^2|I|(n-|I|)}{8K^2n}\right) \nn\\
    &= \sum_{I_{\min}\leq i\leq I_{\max}}(n-i+1)\exp\left(-\frac{t^2i(n-i)}{8K^2n}\right),\label{eq:type1error}\\
    P(H_0|H_{1,I})&\leq  \exp\left(-\frac{(\mmd^2[p,q]-t)^2|I|(n-|I|)}{8nK^2}\right), \text{ for }I\in\mathcal I_n^{(a)}\label{eq:type2error}
  \end{flalign}
  where $t$ is the threshold of the test that satisfies $t< \mmd^2[p,q]$.

  Furthermore, the test \eqref{eq:test} is consistent if
  \begin{flalign}
    I_{\min}&\geq  \frac{16K^2(1+\eta)}{t^2}\log n,\label{eq:suf}\\
     I_{\max}&\leq n-\frac{16K^2(1+\eta)}{t^2}\underset{\text{arbitrary k number of }\log}{\underbrace{\log\cdots\cdots\log }n},\label{eq:suf2}
  \end{flalign}
  where $\eta$ is any positive constant.
\end{theorem}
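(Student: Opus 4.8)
The plan is to establish the two concentration bounds first and then derive the consistency conditions from them. The key probabilistic tool will be a concentration inequality for the statistic $\mmd^2_{u,I}(Y_I,Y_{\bar I})$ around its mean. Since the kernel is bounded, $0\le k(x,y)\le K$, the U-statistic form in \eqref{eq:mmdu} is a bounded-difference functional of the independent samples $Y_1,\ldots,Y_n$: changing any single $Y_i$ perturbs each of the three averaged terms by at most an amount controlled by $K$ and the reciprocals of $|I|$ and $n-|I|$. The natural route is therefore McDiarmid's inequality (a bounded differences / martingale concentration bound), which for a function with bounded differences $c_i$ gives a tail of the form $\exp\bigl(-2\epsilon^2/\sum_i c_i^2\bigr)$. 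First I would compute the bounded-difference constants $c_i$ explicitly for $\mmd^2_{u,I}$, separating the cases $i\in I$ (where the relevant sample size is $|I|$) and $i\notin I$ (where it is $n-|I|$); summing $\sum_i c_i^2$ should produce a quantity proportional to $K^2 n/\bigl(|I|(n-|I|)\bigr)$, which is exactly what yields the exponent $\tfrac{\epsilon^2 |I|(n-|I|)}{8K^2 n}$ appearing in \eqref{eq:type1error} and \eqref{eq:type2error}.

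With the concentration bound in hand, the two error terms follow by choosing $\epsilon$ appropriately. For the type I error, under $H_0$ every sample is drawn from $p$, so $\mE[\mmd^2_{u,I}]=\mmd^2[p,p]=0$ for every $I$; the event $\{\mmd^2_{u,I}\ge t\}$ is thus a deviation of size $t$ above the mean, and a union bound over all $I\in\cI_n^{(a)}$ gives \eqref{eq:type1error}, with the count $(n-i+1)$ of intervals of length $i$ making the sum explicit. For the type II error, under $H_{1,I}$ with the true anomalous interval $I$, the statistic $\mmd^2_{u,I}$ has mean $\mmd^2[p,q]>0$ (using the unbiasedness of the estimator over samples drawn genuinely from $p$ inside $\bar I$ and $q$ inside $I$), and the test fails only if $\mmd^2_{u,I}<t$, i.e.\ a downward deviation of size $\mmd^2[p,q]-t>0$; applying the same concentration bound with $\epsilon=\mmd^2[p,q]-t$ yields \eqref{eq:type2error}. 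I would note that unbiasedness of \eqref{eq:mmdu} is what makes the means come out cleanly as $\mmd^2[p,p]$ and $\mmd^2[p,q]$, so this step relies on the construction in Section~\ref{sec:model}.

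The consistency claim then reduces to showing that, under \eqref{eq:suf} and \eqref{eq:suf2}, both bounds tend to zero. For the type II error this is immediate: \eqref{eq:suf} and \eqref{eq:suf2} guarantee $|I|(n-|I|)/n$ grows at least like $\log n$ (since the smaller of $|I|$ and $n-|I|$ is at least of order $\log n$ up to the slowly varying correction), so the single exponential in \eqref{eq:type2error} vanishes. The delicate part is the type I error, because it is a \emph{sum} over the roughly $n^2$ intervals, and each summand carries a polynomial prefactor $(n-i+1)$. The plan is to bound the sum by splitting the range of $i$: for $i$ near $I_{\min}$ (the small-interval regime) the exponent is dominated by $i\approx\tfrac{t^2 i}{8K^2}$ after $n-i\approx n$, and condition \eqref{eq:suf} with the $(1+\eta)$ slack makes $(n-i+1)\exp(-\tfrac{t^2 i}{8K^2})$ summable to $o(1)$; a symmetric argument handles $i$ near $I_{\max}$ using \eqref{eq:suf2}, where the iterated-logarithm condition on $n-I_{\max}$ is exactly what is needed to overcome the prefactor. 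I expect the \textbf{main obstacle} to be this summation in the type I bound: one must verify that the exponential decay from the $(1+\eta)\log n$ factor beats the polynomial interval count uniformly across the entire range $I_{\min}\le i\le I_{\max}$, and in particular that the unusual iterated-logarithm condition \eqref{eq:suf2} is both necessary and sufficient to control the large-$i$ tail where $n-i$ is small. Carefully organizing the split and bounding each piece by a convergent (or vanishing) series is where the real work lies; the concentration step itself is comparatively routine once the bounded-difference constants are computed.
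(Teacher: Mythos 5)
Your proposal follows essentially the same route as the paper's own proof: McDiarmid's bounded-differences inequality applied to $\mmd^2_{u,I}$ with constants $c_i=4K/|I|$ for $i\in I$ and $c_i=4K/(n-|I|)$ for $i\notin I$ (giving $\sum_i c_i^2 = 16K^2 n/(|I|(n-|I|))$ and hence the stated exponents), a union bound over $\cI_n^{(a)}$ for the type I error, a single downward-deviation bound from the mean $\mmd^2[p,q]$ for the type II error, and then a split of the type I sum into a small-$|I|$ regime controlled by \eqref{eq:suf} and a large-$|I|$ regime handled by iterating the logarithmic splitting to obtain \eqref{eq:suf2}. You correctly identify the iterated decomposition of the large-$|I|$ tail as the only genuinely delicate step, which is exactly where the paper expends its effort.
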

\begin{proof}
  See Appendix \ref{app:line}.
\end{proof}

We note that many kernels satisfy the boundedness condition required in Theorem \ref{thm:achiv}, such as Gaussian kernel and Laplacian Kernel.

The above theorem implies that to guarantee consistency of the proposed test, the minimum length $I_{\min}$ should scale at the order $I_{\min}=\Omega(\log n)$. Furthermore, $n-I_{\max}$ should scale at the order $\Omega(\underset{\text{arbitrary k number of }\log}{\underbrace{\log\cdots\cdots\log }n})$  which can be arbitrarily slow. Hence, the number of candidate anomalous intervals in the set $\mathcal I_n^{(a)}$ is  $\Theta(n^2)$, which is at the same order as the number of all intervals. Hence, at the order sense, not many intervals are excluded from being anomalous.

It can be seen that the conditions \eqref{eq:suf} and \eqref{eq:suf2} on $I_{\min}$ and $I_{\max}$ are asymmetric. This can be understood by the upper bound \eqref{eq:type1error} on the type I error, which is a sum over all candidate anomalous intervals with length between $I_{\min}$ and $I_{\max}$. Due to the specific geometric structure of the line network, as the length $|I|$   increases, the number of candidate anomalous intervals with length $|I|$ equals $n-|I|+1$ and decreases as $|I|$ increases. Although the term $\exp\left(-\frac{t^2i(n-i)}{8K^2n}\right)$ in \eqref{eq:type1error} is symmetric over $i$ with respect to $\frac{n}{2}$, the entire term $(n-i+1)\exp\left(-\frac{t^2i(n-i)}{8K^2n}\right)$ is not symmetric, which consequently yields the asymmetric conditions on $I_{\min}$ and $I_{\max}$.

Theorem \ref{thm:achiv} requires that the threshold $t$ in the test \eqref{eq:test} to be less than $\mmd^2[p,q]$. In practice, the information of $\mmd^2[p,q]$ may or may not be available depending on specific applications. If it is known, then the threshold $t$ can be set as a constant smaller than $\mmd^2[p,q]$. If it is unknown, then the threshold $t$ needs to scale to zero as $n$ gets large in order to be asymptotically smaller than $\mmd^2[p,q]$. We summarize these two cases in the following two corollaries.


\begin{corollary}\label{cor:t1}
  If the value $\mmd^2[p,q]$ is known a priori, we set the threshold $t=(1-\delta)\mmd^2[p,q]$ for any $0< \delta < 1$. The test in \eqref{eq:test} is consistent, if  $I_{\min}$ and $I_{\max}$ satisfy the following conditions,
  \begin{flalign}\label{c1}
    I_{\min}&\geq\frac{16K^2(1+\eta')}{\mmd^4[p,q]}\log n\nn\\
    I_{\max}&\leq n-\frac{16K^2(1+\eta')}{\mmd^4[p,q]}\underset{\text{arbitrary k number of }\log}{\underbrace{\log\cdots\cdots\log }n},
  \end{flalign}
  where $\eta'$ is any positive constant.
\end{corollary}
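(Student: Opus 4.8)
The plan is to reduce both error probabilities to two-sided tail bounds for the U-statistic $\mmd^2_{u,I}(Y_I,Y_{\bar I})$ around its mean, and then to control the resulting exponential sums. The key structural fact is that \eqref{eq:mmdu} is an \emph{unbiased} estimator: under $H_0$ every sample is drawn from $p$, so $\mE[\mmd^2_{u,I}(Y_I,Y_{\bar I})]=\mmd^2[p,p]=0$ for \emph{every} candidate $I$; under $H_{1,I}$ the samples inside the true interval $I$ come from $q$ and those outside from $p$, so evaluating the statistic at the true interval gives $\mE[\mmd^2_{u,I}(Y_I,Y_{\bar I})]=\mmd^2[p,q]$. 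Everything then hinges on showing that $\mmd^2_{u,I}$ concentrates sharply about these means.

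First I would establish the concentration via the bounded-difference (McDiarmid) inequality, viewing $\mmd^2_{u,I}$ as a function of the $|I|$ samples inside and the $n-|I|$ samples outside $I$. Using $0\le k\le K$, replacing one inside sample perturbs the within-group term and the cross term by at most $2K/|I|$ each, giving a bounded difference $c_{\mathrm{in}}\le 4K/|I|$; symmetrically an outside sample gives $c_{\mathrm{out}}\le 4K/(n-|I|)$. Hence
\[
\sum_i c_i^2=|I|\Big(\tfrac{4K}{|I|}\Big)^2+(n-|I|)\Big(\tfrac{4K}{n-|I|}\Big)^2=\frac{16K^2 n}{|I|(n-|I|)},
\]
and McDiarmid yields, for each tail, $P\big(\pm(\mmd^2_{u,I}-\mE\,\mmd^2_{u,I})\ge s\big)\le \exp\big(-s^2|I|(n-|I|)/(8K^2 n)\big)$.

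With the concentration in hand the two error bounds follow directly. For the type I error I would apply a union bound over all $I\in\cI_n^{(a)}$ with the upper tail at $s=t$ (recall $\mE\,\mmd^2_{u,I}=0$ under $H_0$); grouping the $n-i+1$ intervals of common length $i$ produces exactly \eqref{eq:type1error}. For the type II error under $H_{1,I}$, since the maximum in the test \eqref{eq:test} is at least the value at the true interval, the event that the test declares $H_0$ is contained in $\{\mmd^2_{u,I}<t\}$; applying the lower tail with $s=\mmd^2[p,q]-t>0$ gives \eqref{eq:type2error}.

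The delicate part, and the main obstacle, is showing both error terms vanish and extracting the \emph{asymmetric} conditions \eqref{eq:suf}–\eqref{eq:suf2}. The type II term is easy: $\max_I$ is attained where $|I|(n-|I|)$ is smallest, i.e.\ at $|I|=I_{\min}$ or $|I|=I_{\max}$, and since $I_{\min}=\Omega(\log n)$ and $n-I_{\max}\to\infty$ both force the exponent in \eqref{eq:type2error} to diverge, it tends to zero. The asymmetry lives entirely in the type I sum, which I would split at $i=n/2$. For $i\le n/2$ one has $n-i\ge n/2$, hence $i(n-i)/n\ge i/2$, so each summand is at most $n\exp(-t^2 i/(16K^2))$; summing this geometric series and imposing $I_{\min}\ge 16K^2(1+\eta)\log n/t^2$ makes the leading factor $n\cdot n^{-(1+\eta)}=n^{-\eta}\to 0$ (the factor $16$, versus the naive $8$, is precisely the cost of the bound $i(n-i)/n\ge i/2$). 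For $i>n/2$ the roles swap: now $i(n-i)/n\ge (n-i)/2$ while the count factor $n-i+1$ is small, so writing $j=n-i$ the terms are $\lesssim (j+1)\exp(-t^2 j/(16K^2))$, a convergent series dominated by its smallest index $j_0=n-I_{\max}$ and hence of size $O(j_0\,e^{-c j_0})$ with $c=t^2/(16K^2)$. This vanishes as soon as $j_0\to\infty$, which is exactly why $n-I_{\max}$ need only grow like an arbitrarily long iterated logarithm, whereas $I_{\min}$ must grow like $\log n$ to defeat the large count factor $n-i+1$ at small $i$.
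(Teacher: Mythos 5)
Your argument is correct, but it takes a very different route from the paper: the paper's entire proof of this corollary is the one-line substitution $t=(1-\delta)\mmd^2[p,q]$ into Theorem \ref{thm:achiv}, with the bookkeeping $1+\eta'=\frac{1+\eta}{(1-\delta)^2}$ (note this requires taking $\delta$ small when $\eta'$ is small, so that $\eta=(1+\eta')(1-\delta)^2-1>0$). You instead re-derive the content of Theorem \ref{thm:achiv} from scratch, and your McDiarmid computation ($\sum_i c_i^2 = 16K^2n/(|I|(n-|I|))$), union bound, and type II reduction all match the paper's Appendix \ref{app:line} exactly. Where you genuinely diverge is the type I sum for long intervals: the paper repeatedly peels off blocks $[\,n-C\log n,\,n-C\log\log n\,]$, etc., bounding each block by (count)$\times$(largest term), which is what produces the iterated-logarithm condition on $I_{\max}$; you instead substitute $j=n-i$, use $i(n-i)/n\ge j/2$, and sum the convergent series $\sum_{j\ge j_0}(j+1)e^{-cj}=O(j_0e^{-cj_0})$ directly. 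This is cleaner and in fact proves something stronger --- the type I error vanishes whenever $n-I_{\max}\to\infty$, so the iterated-log hypothesis in \eqref{c1} is more than sufficient for your bound --- and it also makes the source of the asymmetry between $I_{\min}$ and $I_{\max}$ transparent (the count factor $n-i+1$ is of order $n$ at small $i$ but of order $j$ at large $i$). The only step you leave implicit is the final substitution of $t=(1-\delta)\mmd^2[p,q]$ to convert your conditions in $t,\eta$ into the stated conditions in $\mmd^4[p,q],\eta'$; that trivial step is literally all the paper's own proof consists of, so you should state it.
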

Corollary \ref{cor:t1} follows directly from Theorem \ref{thm:achiv} by setting $\eta'=\frac{1+\eta}{(1-\delta)^2}-1$.

\begin{corollary}\label{crl:unknownt}
  If the value $\mmd^2[p,q]$ is unknown, we set the threshold t to scale with $n$, such that $\lim_{n\rightarrow \infty}t_n=0$. The test in \eqref{eq:test} is consistent, if  $I_{\min}$ and $I_{\max}$ satisfy the following conditions,
  \begin{flalign}\label{c2}
    I_{\min}&\geq\frac{16K^2(1+\eta)}{t_n^2}\log n\nn\\
    I_{\max}&\leq n-\frac{16K^2(1+\eta)}{t_n^2}\underset{\text{arbitrary k number of }\log}{\underbrace{\log\cdots\cdots\log }n},
  \end{flalign}
  where $\eta$ is any positive constant.
\end{corollary}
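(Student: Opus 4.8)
The plan is to obtain the corollary from Theorem \ref{thm:achiv} applied with the varying threshold $t=t_n$, the essential extra work being to verify that the error bounds \eqref{eq:type1error}--\eqref{eq:type2error} actually vanish, since letting $t_n\to 0$ makes the constants in those bounds grow. First I would record the elementary but crucial fact that, because $p\neq q$ and the mean embedding is injective for the kernels under consideration, $\mmd^2[p,q]$ is a fixed strictly positive constant independent of $n$. As $t_n\to 0$, there is then an $N_0$ with $t_n<\mmd^2[p,q]$ for all $n>N_0$, so the hypothesis $t<\mmd^2[p,q]$ of Theorem \ref{thm:achiv} holds with $t=t_n$; moreover \eqref{c2} is precisely \eqref{eq:suf}--\eqref{eq:suf2} with $t$ replaced by $t_n$. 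Thus both error bounds are available with $t=t_n$, and it remains only to show $R_m^{(n)}\to 0$.

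For the type II error I would bound $\max_{I}P(H_0|H_{1,I})$ uniformly. Using $\frac{|I|(n-|I|)}{n}\ge\tfrac12\min(|I|,\,n-|I|)\ge\tfrac12\min(I_{\min},\,n-I_{\max})$ for every $I\in\cI_n^{(a)}$, together with \eqref{c2} and $\underbrace{\log\cdots\log}_{k}n\le\log n$, the quantity $\frac{|I|(n-|I|)}{n}$ is at least $\frac{8K^2(1+\eta)}{t_n^2}\underbrace{\log\cdots\log}_{k}n$. Since $(\mmd^2[p,q]-t_n)^2\to(\mmd^2[p,q])^2>0$, for large $n$ the exponent in \eqref{eq:type2error} exceeds a positive constant times $\frac{1+\eta}{t_n^2}\underbrace{\log\cdots\log}_{k}n$, which diverges; here the smallness of $t_n$ only strengthens the bound, so the type II error tends to zero.

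The crux is the type I error. I would split the sum in \eqref{eq:type1error} at $i=n/2$. On $I_{\min}\le i\le n/2$, using $\frac{i(n-i)}{n}\ge\tfrac{i}{2}$ and $n-i+1\le n$ bounds the partial sum by the geometric series $n\sum_{i\ge I_{\min}}e^{-a_n i}$ with $a_n=\frac{t_n^2}{16K^2}$, whose leading term is $n\,e^{-a_n I_{\min}}\le n\cdot n^{-(1+\eta)}=n^{-\eta}$ by \eqref{c2}. On $n/2<i\le I_{\max}$ the substitution $j=n-i$ and the $I_{\max}$ condition give the analogous estimate, with $n^{-(1+\eta)}$ replaced by a vanishing factor governed by the iterated logarithm. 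The point absent from the fixed-threshold proof of Theorem \ref{thm:achiv} is that summing the geometric series contributes a prefactor $\frac{1}{1-e^{-a_n}}\sim\frac{16K^2}{t_n^2}$, which now diverges; consequently the first regime is of order $\frac{n^{-\eta}}{t_n^2}$ and the second of order $t_n^{-4}$ times a vanishing iterated-log factor.

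The main obstacle is therefore controlling this divergent prefactor against the decay supplied by \eqref{c2}: one needs $t_n\to 0$ slowly enough that $\frac{n^{-\eta}}{t_n^2}\to 0$, i.e.\ $t_n=\omega(n^{-\eta/2})$, and that the iterated-log factor dominates $t_n^{-4}$ in the second regime. Both requirements are met by any slowly vanishing threshold, in particular by a polylogarithmic choice such as $t_n=1/\sqrt{\log n}$, for which $\frac{n^{-\eta}}{t_n^2}=n^{-\eta}\log n\to 0$. Under such a choice both regimes of the type I sum vanish, so $P(H_1|H_0)\to 0$; combined with the type II estimate this yields $R_m^{(n)}\to 0$, establishing consistency without knowledge of $\mmd^2[p,q]$.
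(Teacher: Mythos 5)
Your skeleton is the same as the paper's: the paper disposes of this corollary in one line, substituting $t=t_n$ into Theorem \ref{thm:achiv} and noting that $t_n<\mmd^2[p,q]$ for all large $n$, the implicit understanding being that every inequality in Appendix \ref{app:line} is a finite-$n$ bound, so the whole chain can be rerun with $t=t_n$. Your first paragraph is exactly that reduction, your type II argument is correct (and rightly observes that small $t_n$ only strengthens that bound), and your instinct that the limits in the proof of Theorem \ref{thm:achiv} must be re-examined once $t$ depends on $n$ is sound --- the paper does gloss this point.

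However, your execution has two concrete defects. First, the restriction $t_n=\omega(n^{-\eta/2})$ is an artifact of a lossy bound: summing $n e^{-a_n i}$ as a geometric series costs the prefactor $(1-e^{-a_n})^{-1}\sim 16K^2/t_n^2$, and using $i(n-i)/n\ge i/2$ discards a factor of $2$ in the exponent at $i=I_{\min}\ll n$. The paper instead bounds the number of intervals in the first regime crudely by $n^2$ and evaluates $|I|(n-|I|)$ at the endpoint, obtaining $\exp\bigl(2\log n-2(1+\eta)(1-I_{\min}/n)\log n\bigr)=n^{-2\eta+o(1)}$ whenever $\log n/(t_n^2 n)\to 0$, a condition essentially forced by the requirement that \eqref{c2} be satisfiable at all ($I_{\min}\le n$); no polynomial lower bound on $t_n$ is needed. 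Second, and more seriously, your closing verification fails on its own terms: your second-regime estimate is of order $t_n^{-4}\,(\underbrace{\log\cdots\log}_{k}n)\,(\underbrace{\log\cdots\log}_{k-1}n)^{-(1+\eta)}$, and for your exemplar $t_n=1/\sqrt{\log n}$ with $k=2$ this is, up to constants, $(\log n)^{1-\eta}\log\log n$, which diverges for every $\eta\le 1$; since the corollary is asserted for every positive constant $\eta$ and arbitrary $k$, the claim that ``both requirements are met \ldots in particular by $t_n=1/\sqrt{\log n}$'' is false, and you never actually check the second requirement against this choice. Note also what you end up proving: consistency only for sufficiently slowly vanishing thresholds, whereas the corollary (see the remark immediately following it) asserts consistency for any $t_n\to 0$ obeying \eqref{c2}. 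You have genuinely located a soft spot in the paper --- its one-line proof silently assumes the limits of Appendix \ref{app:line} survive the substitution $t\mapsto t_n$, and with the paper's own bounds the deepest regime likewise requires $t_n^2=\omega\bigl((\underbrace{\log\cdots\log}_{k-1}n)^{-\eta}\bigr)$ --- but flagging the difficulty is not resolving it: as written, your argument neither recovers the stated generality nor correctly certifies the special case to which you retreat.
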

Corollary \ref{crl:unknownt} follows directly from Theorem \ref{thm:achiv} by noting that $t_n < \mmd^2[p,q]$ for $n$ large enough.

We note that Corollary \ref{crl:unknownt} holds for any $t_n$ that satisfies $\lim_{n\rightarrow \infty}t_n = 0$. It is clear from Corollary \ref{crl:unknownt} that for the case when $\mmd^2[p,q]$ is unknown,  $I_{\min}$ should scale at the order  $\omega(\log n)$, and $n-I_{\max}$ should scale at the order $\omega(\underset{\text{arbitrary k number of }\log}{\underbrace{\log\cdots\cdots\log }n})$.
Hence, comparison of the above two corollaries implies that the prior knowledge about $\mmd^2[p,q]$ is very important for network ability to identifying anomalous events. If $\mmd^2[p,q]$ is known, then the network can resolve an anomalous object with the size $\Omega(\log n)$. However, if such knowledge is unknown, the network can resolve only bigger anomalous objects with the size $\omega(\log n)$.


We note that Theorem \ref{thm:achiv} and Corollaries \ref{cor:t1} and \ref{crl:unknownt} characterize the conditions to guarantee test consistency for a pair of fixed but unknown distributions $p$ and $q$. Hence, the conditions \eqref{eq:suf}, \eqref{eq:suf2}, \eqref{c1} and \eqref{c2} depend on the underlying distributions $p$ and $q$. In fact, these conditions further yield the following condition that guarantees the proposed test to be universally consistent for any arbitrary $p$ and $q$.
\begin{proposition}[Universal Consistency]\label{prop:universal}
Consider the nonparametric problem given in Section \ref{sec:model}. Further assume the test in \eqref{eq:test} applies a bounded kernel  with $0\leq k(x,y)\leq K$ for any $(x,y)$. Then the test \eqref{eq:test} is universally consistent for any arbitrary pair of $p$ and $q$, if
  \begin{flalign}\label{c3}
    I_{\min}&=\omega(\log n)\nn\\
    I_{\max}&=n-\Omega(\underset{\text{arbitrary k number of }\log}{\underbrace{\log\cdots\cdots\log }n})
  \end{flalign}
\end{proposition}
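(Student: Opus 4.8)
The plan is to deduce the proposition from Corollary~\ref{crl:unknownt} by exhibiting a \emph{single} threshold sequence $t_n$, chosen independently of the pair $(p,q)$, for which the hypotheses \eqref{c2} are met under the stated scaling of $I_{\min}$ and $I_{\max}$. Corollary~\ref{crl:unknownt} already yields consistency for each fixed (but unknown) pair once such a $t_n$ is in place, so showing that one and the same $t_n$ works for every $(p,q)$ is precisely universal consistency. Throughout I would use that the mean embedding is injective (as recalled in Section~\ref{sec:model}), so that $p\neq q$ forces $\mmd^2[p,q]=\|\mu_p-\mu_q\|_{\cH}^2>0$, a fixed positive constant for each pair.

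Write $L_1(n)=\log n$ and $L_k(n)=\log L_{k-1}(n)$ for the $k$-fold iterated logarithm, and fix $\eta=1$. By hypothesis $n-I_{\max}\geq c\,L_{k_0}(n)$ for some $k_0$, some $c>0$, and all large $n$. I would set
\[
a_n=\frac{16K^2(1+\eta)\,\log n}{I_{\min}},\qquad
b_n=\frac{16K^2(1+\eta)}{c}\cdot\frac{L_{k_0+1}(n)}{L_{k_0}(n)},
\]
and take the threshold to be $t_n^2=\max(a_n,b_n)$. Since $I_{\min}=\omega(\log n)$ gives $a_n\to0$ and $L_{k_0+1}(n)=o(L_{k_0}(n))$ gives $b_n\to0$, we get $t_n\to0$, as Corollary~\ref{crl:unknownt} demands. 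Moreover $t_n^2\geq a_n$ is exactly the first line of \eqref{c2}, while $t_n^2\geq b_n$ rearranges to $\frac{16K^2(1+\eta)}{t_n^2}L_{k_0+1}(n)\leq c\,L_{k_0}(n)\leq n-I_{\max}$, which is the second line of \eqref{c2} read with $k_0+1$ nested logarithms. Hence \eqref{c2} holds for all large $n$ with this single, distribution-independent $t_n$, the key point being that $a_n,b_n$ depend only on $I_{\min},I_{\max},K,\eta,c,k_0$, none of which involve $(p,q)$.

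Now fix any pair $p\neq q$. Since $t_n\to0$ while $\mmd^2[p,q]>0$ is a fixed constant, eventually $t_n<\mmd^2[p,q]$, so Corollary~\ref{crl:unknownt} (a specialization of Theorem~\ref{thm:achiv}) applies for this pair: the type~I bound \eqref{eq:type1error} tends to $0$ under \eqref{c2}. For the type~II bound \eqref{eq:type2error} I would use $(\mmd^2[p,q]-t_n)^2\to\mmd^4[p,q]>0$ together with the elementary inequality $\frac{|I|(n-|I|)}{n}\geq\frac12\min(I_{\min},\,n-I_{\max})$, valid for every $I\in\cI_n^{(a)}$, whose right-hand side diverges; thus $\max_{I\in\cI_n^{(a)}}P(H_0|H_{1,I})\to0$ as well and $R_m^{(n)}\to0$ for this pair. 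As the same $t_n$ served every $(p,q)$ and the pair was arbitrary, the test is universally consistent.

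I expect the only genuine obstacle to be the maximum-size condition. The requirement $t_n\to0$---forced because $\mmd^2[p,q]$ may be arbitrarily small and unknown---inflates the factor $1/t_n^2$ in \eqref{eq:suf2}, and this blow-up must be absorbed without contradicting the permitted scaling of $n-I_{\max}$. This is exactly what the freedom in the number of iterated logarithms provides: replacing $k_0$ logarithms by $k_0+1$ gains a factor $L_{k_0}(n)/L_{k_0+1}(n)\to\infty$ that dominates any sufficiently slowly vanishing $t_n$, so a legal threshold sequence exists. The minimum-size side is comparatively routine, since $I_{\min}=\omega(\log n)$ directly makes $a_n\to0$.
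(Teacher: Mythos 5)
Your proposal is correct and follows the same route the paper itself takes: deducing universal consistency from Theorem~\ref{thm:achiv} via Corollary~\ref{crl:unknownt} by choosing a distribution-independent threshold $t_n\to 0$ and using that $\mmd^2[p,q]$ is a fixed positive constant for each pair (which, as you rightly note, requires the injectivity of the mean embedding, i.e.\ a characteristic kernel, not merely boundedness). The paper states this in one sentence; your explicit construction $t_n^2=\max(a_n,b_n)$, and in particular the observation that one extra iterated logarithm absorbs the $1/t_n^2$ blow-up in \eqref{eq:suf2}, is exactly the detail needed to make that sentence rigorous.
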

\begin{proof}
This result follows from \eqref{eq:suf}, \eqref{eq:suf2}, \eqref{c1} and \eqref{c2} and the fact that $\mmd[p,q]$ is a constant for any given $p$ and $q$.
\end{proof}

\subsection{Necessary Conditions}
In Section \ref{sec:suff}, Proposition \ref{prop:universal} suggests  the sufficient conditions on $I_{\min}$ and $I_{\max}$ to guarantee  the proposed nonparametric test to be universally consistent for arbitrary $p$ and $q$.
In the following theorem, we characterize the necessary conditions on $I_{\min}$ and $I_{\max}$ that any test must satisfy in order to be universally consistent for arbitrary $p$ and $q$.
\begin{theorem}\label{thm:conv}
  For the nonparametric detection problem described in Section \ref{sec:model} over a line network, any test must satisfy the following conditions on $I_{\min}$ and $I_{\max}$ in order to be universally consistent for arbitrary $p$ and $q$:
  \begin{flalign}\label{eq:nec}
    I_{\min}&=\omega(\log n)\nn\\
    \text{and }n-I_{\max}&\rightarrow\infty, \text{ as } n\rightarrow \infty.
  \end{flalign}
\end{theorem}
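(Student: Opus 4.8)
The plan is to establish both necessary conditions as information-theoretic impossibility results, exploiting the freedom to choose the (arbitrary, unknown) pair $p,q$ adversarially: for each condition I would assume its negation and exhibit a fixed pair $p\neq q$ for which the minimax risk $R_m^{(n)}$ stays bounded away from $0$ along a subsequence, so that no test — in particular not the one in \eqref{eq:test} — can be universally consistent. Throughout I would lower bound $R_m^{(n)}$ by a Bayesian risk and then by a total variation distance, using the standard fact that for any decision rule $\phi$ and any two data laws $P,Q$ one has $P(\phi=H_1)+Q(\phi=H_0)\ge 1-\lVert P-Q\rVert_{\mathrm{TV}}$.

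For the condition $I_{\min}=\omega(\log n)$, I would argue by contradiction, supposing $I_{\min}\le c\log n$ along a subsequence for some finite $c$. Let $P_0$ be the null law (all $n$ samples i.i.d.\ $p$) and $P_{1,I}$ the alternative law with anomalous interval $I$. I would place a uniform prior over a family $\mathcal S$ of $m=\lfloor n/I_{\min}\rfloor$ \emph{disjoint} intervals of length exactly $I_{\min}$ (all of which lie in $\cI_n^{(a)}$), form the mixture $\bar P_1=\frac1m\sum_{I\in\mathcal S}P_{1,I}$, and bound $R_m^{(n)}\ge 1-\lVert P_0-\bar P_1\rVert_{\mathrm{TV}}\ge 1-\tfrac12\sqrt{\chi^2(\bar P_1\Vert P_0)}$. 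Writing $L=dq/dp$ and $\rho=\mE_p[L^2]-1=\chi^2(q\Vert p)$, the disjointness of the intervals makes all cross terms trivial and yields the clean identity
\[
  \chi^2(\bar P_1\Vert P_0)=\frac{(1+\rho)^{I_{\min}}-1}{m}.
\]
Choosing $p,q$ distinct but close enough that $c\log(1+\rho)<1$ (e.g.\ two Bernoulli distributions with a small mean gap, for which $\rho$ is arbitrarily small), I get $(1+\rho)^{I_{\min}}\le n^{c\log(1+\rho)}=n^{\alpha}$ with $\alpha<1$, while $m=\Theta(n/\log n)$, so $\chi^2\to 0$ and hence $R_m^{(n)}\to 1$ along the subsequence. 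This shows that universal consistency forces $I_{\min}=\omega(\log n)$.

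For the condition $n-I_{\max}\to\infty$, I would use a two-point (Le Cam) argument together with a role-swapping trick that is available precisely because both $p$ and $q$ are unknown. Suppose $n-I_{\max}\le b_0$ along a subsequence, and fix a length-$I_{\max}$ interval $I^\ast$, so its complement has size $b=n-I_{\max}\le b_0$. I would compare two problem instances built from the same pair of distributions: instance $(p,q)$ (null $p$, anomaly $q$) under the alternative $H_{1,I^\ast}$, whose data law $P_B$ has $q$ inside $I^\ast$ and $p$ on the $b$ outside coordinates; and instance $(q,p)$ (null $q$, anomaly $p$) under its null, whose data law $P_A$ is i.i.d.\ $q$. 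These two product laws agree on the $I_{\max}$ coordinates of $I^\ast$ and differ only on the $b$ outside coordinates, so by subadditivity of total variation for product measures $\lVert P_A-P_B\rVert_{\mathrm{TV}}\le b\,\lVert p-q\rVert_{\mathrm{TV}}$. Denoting by $R_m^{(1)},R_m^{(2)}$ the minimax risks of the two instances, I get $R_m^{(1)}+R_m^{(2)}\ge P_B(\phi=H_0)+P_A(\phi=H_1)\ge 1-\lVert P_A-P_B\rVert_{\mathrm{TV}}\ge 1-b_0\lVert p-q\rVert_{\mathrm{TV}}$. Choosing distinct $p,q$ with $\lVert p-q\rVert_{\mathrm{TV}}<1/b_0$ keeps the right-hand side bounded away from $0$, so at least one of the two instances is inconsistent, contradicting universal consistency; hence $n-I_{\max}\to\infty$ is necessary.

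I expect the main obstacle to be the $I_{\min}$ bound rather than the $I_{\max}$ bound. The delicate points there are to keep the chi-square divergence controlled: the disjoint-interval prior is what makes the second-moment computation collapse to a single exponential term, and one must balance the per-sample divergence $\rho$ against the constant $c$ in $I_{\min}\le c\log n$ so that the exponent $c\log(1+\rho)$ stays below $1$ — this is exactly why a single fixed (and suitably close) pair $p,q$ suffices along the whole subsequence. By contrast, the $I_{\max}$ argument is a soft two-point comparison whose only real content is the swap between $(p,q)$ and $(q,p)$; its correctness hinges on the fact that the detector cannot know which distribution is the background, so I would take care to state the two instances as genuinely distinct hypothesis-testing problems that a universally consistent test must solve simultaneously.
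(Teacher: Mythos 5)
Your proposal is correct, and both halves deviate from the paper's proof in instructive ways. For $I_{\min}$, the paper also lower-bounds the minimax risk by a Bayes risk under a uniform prior and runs a second-moment argument, but it places the prior on \emph{all} $n-k+1$ intervals of length $k$, specializes to Gaussian $p=\mathcal N(0,1)$ and $q=\mathcal N(\mu,1)$ so that the chi-square quantity becomes $\mathbb{E}[e^{\mu^2 Z}]-1$ with $Z$ the overlap of two independent uniformly drawn intervals, and then has to work out the overlap distribution and bound the resulting sums; your restriction to $\lfloor n/I_{\min}\rfloor$ \emph{disjoint} intervals kills all cross terms, gives the closed form $\big((1+\rho)^{I_{\min}}-1\big)/m$ for any pair with finite $\rho=\chi^2(q\Vert p)$, and your balance $c\log(1+\rho)<1$ plays exactly the role of the paper's choice $k\le \frac{1}{2\mu^2}\log n$ with $\mu$ small. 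Your version is shorter, avoids the Gaussian-specific computation and the appeal to the Bayes-risk inequality imported from the prior literature, and costs nothing since a restricted prior still lower-bounds the minimax risk. For $I_{\max}$, the paper reduces to a single candidate interval and simply \emph{asserts} that the resulting two-sample problem requires both sample sizes to diverge; your Le Cam two-point argument with the $(p,q)\leftrightarrow(q,p)$ role swap and the bound $\lVert P_A-P_B\rVert_{\mathrm{TV}}\le (n-I_{\max})\,\lVert p-q\rVert_{\mathrm{TV}}$ actually proves that assertion, and correctly isolates the source of the obstruction, namely that the detector does not know which of the two distributions is the background; this part of your argument is more rigorous than what the paper writes down.
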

\begin{proof}
  See Appendix \ref{proof:conv}. The idea of the proof is to first lower bound the minimax risk by the Bayes risk of a simpler problem. Then for such a problem, we show that there exist $p$ and $q$ (in fact Gaussian $p$ and $q$) such that even the optimal parametric test is not consistent under the conditions given in the theorem. This thus implies that under the same condition, no nonparametric test is universally consistent for arbitrary $p$ and $q$.
  \end{proof}

It can be seen that the necessary condition on $I_{\min}$ in \eqref{eq:nec} matches   the sufficient condition in \eqref{c3} at the order level which implies that  the proposed test is order-level optimal in $I_{\min}$. Furthermore,  the sufficient condition on $I_{\max}$ can arbitrarily slowly converge to $n$, which is also very close to the necessary condition on $I_{\max}$. Thus we have the following theorem.
\begin{theorem}[Optimality]
Consider the nonparametric detection problem described in Section \ref{sec:model}.  The MMD-based test \eqref{eq:test} is order-level optimal in terms of $I_{\min}$ and nearly order-level optimal in terms of $I_{\max}$ required to guarantee universal  test consistency for arbitrary $p$ and $q$.
\end{theorem}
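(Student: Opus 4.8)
The plan is to prove this optimality statement as a direct synthesis of the achievability result (Proposition \ref{prop:universal}) and the converse result (Theorem \ref{thm:conv}), since both have already established matching sufficient and necessary conditions. The notion of order-level optimality here means precisely that the sufficient condition guaranteeing consistency of the proposed MMD-based test and the necessary condition that \emph{any} universally consistent test must obey coincide at the order level. Thus the entire argument reduces to placing the two conditions side by side and verifying the order-level match for $I_{\min}$ and the near-match for $I_{\max}$.

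First I would treat $I_{\min}$. Proposition \ref{prop:universal} shows that the test in \eqref{eq:test} is universally consistent whenever $I_{\min}=\omega(\log n)$, while Theorem \ref{thm:conv} shows that any universally consistent test must satisfy $I_{\min}=\omega(\log n)$. These two conditions are identical, so there is no gap whatsoever at the order level. I would conclude immediately that the proposed test is order-level optimal in $I_{\min}$: no test can tolerate a smaller minimum interval length, and the MMD-based test already achieves this boundary.

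Next I would treat $I_{\max}$. Here the sufficient condition from Proposition \ref{prop:universal} is $n-I_{\max}=\Omega(\underset{\text{arbitrary k number of }\log}{\underbrace{\log\cdots\cdots\log }n})$ for any fixed number $k$ of iterated logarithms, whereas the necessary condition from Theorem \ref{thm:conv} is merely $n-I_{\max}\rightarrow\infty$. The comparison I would make is that the sufficient condition permits $n-I_{\max}$ to grow arbitrarily slowly, since $k$ may be taken as large as desired and the iterated logarithm grows slower than any fixed finite tower of logs; yet it still demands growth at the rate of some iterated logarithm, which is marginally stronger than the bare requirement that $n-I_{\max}$ diverge. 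This residual gap between ``diverges to infinity'' and ``grows at least as an iterated logarithm'' is the only discrepancy, and it is negligible at the order level. I would therefore conclude that the test is nearly order-level optimal in $I_{\max}$, with the qualifier ``nearly'' accounting for exactly this vanishing gap.

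The main point to articulate carefully, rather than any genuine technical obstacle, is the precise meaning of ``nearly'' and why the iterated-logarithm gap should be regarded as order-level negligible; the substantive analytical work has already been discharged in the proofs of Theorem \ref{thm:achiv} (yielding Proposition \ref{prop:universal}) and Theorem \ref{thm:conv} in the appendices. Consequently the proof itself is short: it is a one-to-one comparison of the two pairs of conditions, invoking the fact that $\mmd[p,q]$ is a fixed constant for any given $p$ and $q$ so that the distribution-dependent constants in \eqref{eq:suf} and \eqref{eq:suf2} do not affect the order-level scaling. I would close by noting that the asymmetry between the tight match in $I_{\min}$ and the near-match in $I_{\max}$ is inherited directly from the asymmetric structure of the type I error bound \eqref{eq:type1error}, which was already explained following Theorem \ref{thm:achiv}.
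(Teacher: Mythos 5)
Your proposal is correct and matches the paper's own argument, which likewise derives the theorem by directly juxtaposing the sufficient conditions of Proposition \ref{prop:universal} with the necessary conditions of Theorem \ref{thm:conv}: the two $\omega(\log n)$ requirements on $I_{\min}$ coincide exactly, while the iterated-logarithm sufficient condition on $n-I_{\max}$ sits just above the bare necessary requirement that $n-I_{\max}\rightarrow\infty$. The paper offers no additional machinery beyond this side-by-side comparison, so nothing is missing from your account.
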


\section{Generalization to Other Networks}\label{sec:generalization}

In this section, we generalize our study to three other networks in order to demonstrate more generality of our approach. For each network, our study further demonstrates how the geometric property of the network affects the conditions required to guarantee the test consistency.

\subsection{Detecting Interval in Ring Networks}\label{sec:ring}

In this subsection, we consider a ring network (see Figure~\ref{fig:model2}), in which $n$ nodes are located over a ring. We define an interval $I$ to be a subset of consecutive nodes over the ring.  We consider the  following set of candidate anomalous intervals,
\begin{flalign}
  \mathcal I_n^{(a)}=\{I:I_{\min}\leq |I|\leq I_{\max}\},
\end{flalign}
where $I_{\min}$ and $I_{\max}$ are minimal and maximal lengths of all candidate anomalous intervals.
Despite similarities that the ring network shares with the line network, its major difference lies in that
the number of candidate anomalous intervals with size $k$ is $n$ (which remains the same as $k$ increases) as opposed to $n-k+1$ in the line network (which decreases as $k$ increases). Consequently, the number of sub-hypotheses in $H_1$ is different. Such difference is reflected in the results that we present next.

\begin{figure}[htb]
\begin{center}
\includegraphics[width=5cm]{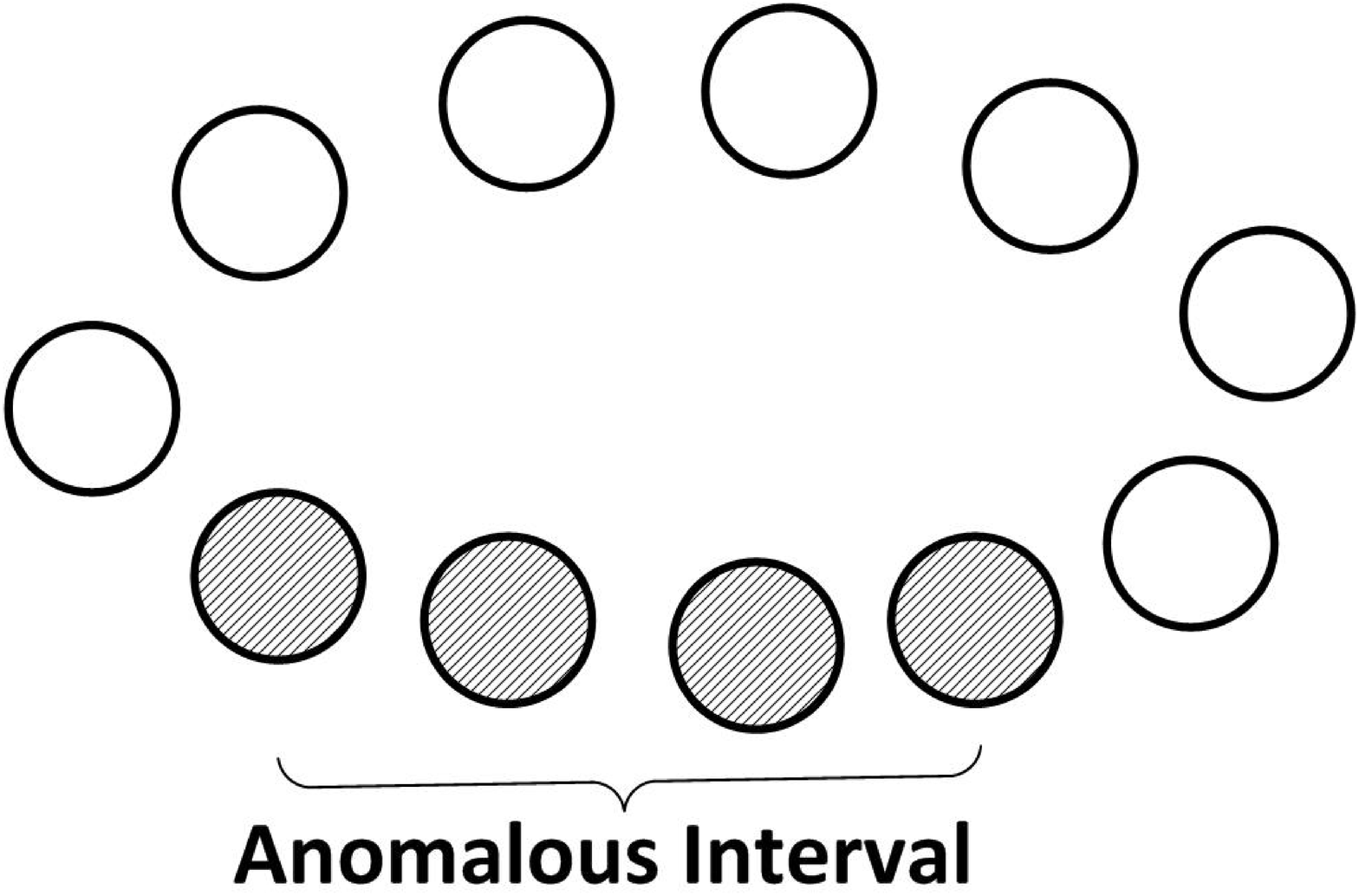}
\caption{A ring  network with an anomalous interval}\label{fig:model2}
\end{center}
\end{figure}

We construct the test  as follows:
\begin{flalign}\label{eq:test2}
\max_{I:I\in \cI_n^{(a)}} \mmd^2_{u,I}(Y_I,Y_{\bar{I}})
\begin{cases}
\ge t, \quad &\text{determine } H_1 \\
< t, &  \text{determine } H_0
\end{cases}
\end{flalign}
where $Y_I$ denotes the samples in the interval $I$, $Y_{\bar I}$ denotes the samples outside the interval $I$, and $t$ is a threshold parameter.

If we apply the test \eqref{eq:test2} with a bounded kernel, then it can be shown (see Appendix \ref{proof:ring}) that the type I and type II errors are bounded as follows:
  \begin{flalign}
    P(H_1|H_0)&\leq \exp{\left(2\log n-\frac{2t^2\min\{I_{\min}(n-I_{\min}),I_{\max}(n-I_{\max})\}}{16nK^2}\right)},\\
    P(H_0|H_{1,I})&\leq \exp\left(-\frac{(\mmd^2[p,q]-t)^2|I|(n-|I|)}{8nK^2}\right),\text{ for }I\in\mathcal I_n^{(a)}
  \end{flalign}
  where $t$ is the threshold of the test that satisfies $t< \mmd^2[p,q]$. Furthermore, the test in \eqref{eq:test2} is consistent, if
  \begin{flalign}
    I_{\min}&\geq\frac{16K^2(1+\eta)}{t^2}\log n, \label{eq:iminring}\\
    I_{\max}&\leq n-\frac{16K^2(1+\eta)}{t^2}\log n, \label{eq:imaxring}
  \end{flalign}
  where $\eta$ is any positive constant.
The detailed proof can be found in Appendix \ref{proof:ring}.
Comparing the above conditions with Theorem \ref{thm:achiv} suggests that although the sufficient conditions on $I_{\min}$ are the same, the conditions on $I_{\max}$ reflect  order-level difference in line and ring networks. For line networks, an anomalous interval can be close to the entire network with only a gap of length $\Omega(\underset{\text{arbitrary k number of }\log}{\underbrace{\log\cdots\cdots\log }n})$. However, for ring networks, the gap can be as large as $\Omega(\log n)$. Such difference in tests' resolution of anomalous intervals is mainly due to the difference in network geometry that further affects the error probability of  tests. By carefully comparing the two types of errors, in fact, the type II error converges to zero as the network size goes to infinity as long as the number of anomalous samples (i.e., length of anomalous intervals) and the number of typical samples (i.e., the gap between anomalous intervals and the entire network) both scale with $n$ to infinity. Thus, the conditions for the type II error being asymptotically small are the same for the two types of networks. The situation is different for the type I error. The key observation is that the number of candidate anomalous intervals with size $k$ is $n-k+1$ in a line network (which decreases as $k$ increases), but is $n$ in a ring network (which remains the same as $k$ increases). Such difference can be as significant as the order level if $k$ is close to $n$, say $n-\Omega(\log n)$. Consequently, the type I error for a line network can be much smaller than that for a ring network, resulting more relaxed condition on $I_{\max}$ to guarantee  consistency.

Similarly to the line network, setting the threshold $t$ for the test \eqref{eq:test2} can be considered in two cases with and without the information of $\mmd[p,q]$. If $\mmd[p,q]$ is known, set $t=(1-\delta)\mmd^2[p,q]$. Otherwise, $t$ can be chosen to scale to zero as $n$ goes to infinity. Similar results as in Corollary \ref{cor:t1} and Corollary \ref{crl:unknownt} can then be derived for a ring network.

Furthermore, \eqref{eq:iminring} and \eqref{eq:imaxring} imply that the test \eqref{eq:test2} is {\em universally consistent} for any arbitrary $p$ and $q$, if
 \begin{flalign}
    I_{\min}=\omega(\log n), \quad \text{and} \quad n-I_{\max} =\omega(\log n).
  \end{flalign}

Following the arguments similar to those for the line network, it can be shown that any test must satisfy the following necessary conditions required on $I_{\min}$ and $I_{\max}$ in order to be {\em universally consistent} for arbitrary $p$ and $q$:
  \begin{flalign}\label{eq:ringconv}
    I_{\min}&=\omega(\log n),\quad \text{ and}\quad n-I_{\max}\rightarrow \infty \text{, as } n\rightarrow \infty.
  \end{flalign}
The detailed proof can be found in Appendix \ref{proof:conv_ring}.

Therefore, combining the above sufficient and necessary conditions, we conclude the following optimality property for the proposed test.
\begin{theorem}[Optimality]
Consider the problem of nonparametric detection of an interval over a ring network. The MMD-based test \eqref{eq:test2} is order-level optimal in terms of $I_{\min}$ required to guarantee universal test consistency for arbitrary $p$
and $q$.
\end{theorem}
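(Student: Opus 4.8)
The plan is to obtain the optimality statement as the juxtaposition of the achievability (sufficient) conditions already derived for the ring test \eqref{eq:test2} and the converse (necessary) conditions that any universally consistent test must obey. The achievability side has been established through the type I and type II error bounds for the ring: combining \eqref{eq:iminring} and \eqref{eq:imaxring} shows that the MMD-based test is universally consistent for arbitrary $p$ and $q$ whenever $I_{\min}=\omega(\log n)$ and $n-I_{\max}=\omega(\log n)$. The converse side is the condition \eqref{eq:ringconv}, which asserts that no test, parametric or nonparametric, can be universally consistent unless $I_{\min}=\omega(\log n)$ and $n-I_{\max}\to\infty$. With both directions in hand, the proof of the optimality theorem reduces to comparing the two thresholds on $I_{\min}$.

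First I would isolate the two statements concerning $I_{\min}$ alone. On the achievability side, $I_{\min}=\omega(\log n)$ suffices; on the converse side, $I_{\min}=\omega(\log n)$ is necessary. Since these coincide as scaling conditions (both require $I_{\min}$ to grow strictly faster than any constant multiple of $\log n$), the smallest admissible order of $I_{\min}$ for the MMD test matches the fundamental limit up to constant factors. This is exactly the assertion that the test is order-level optimal in $I_{\min}$, and this is precisely the statement of the theorem.

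I would then be explicit about why the claim is restricted to $I_{\min}$ and is silent on $I_{\max}$. The achievability condition requires $n-I_{\max}=\omega(\log n)$, whereas the converse only forces $n-I_{\max}\to\infty$; these differ by a factor of order $\log n$, so there is no matching of orders for $I_{\max}$ and no optimality can be claimed in that parameter. Unlike the line network --- where the gap between the conditions on $I_{\max}$ is merely an iterated-logarithm factor and the test is ``nearly'' optimal --- the ring geometry inflates the type I error (the number of candidate intervals of each length is $n$ rather than $n-|I|+1$), which is precisely what prevents a tighter achievability condition on $I_{\max}$.

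The genuine difficulty does not reside in the optimality theorem itself, whose proof is the comparison above, but in the converse \eqref{eq:ringconv} that feeds it. Establishing that converse is the main obstacle: following the line-network argument, one lower-bounds the minimax risk \eqref{eq:mmrisk} by the Bayes risk of a reduced testing problem, and then chooses Gaussian $p$ and $q$ for which even the likelihood-ratio-optimal test fails to be consistent once $I_{\min}=O(\log n)$. Verifying that this Gaussian instance indeed makes the Bayes risk bounded away from zero --- and adapting the counting of candidate intervals to the ring, where the location count per length is $n$ --- is where the real work lies, and it is carried out in Appendix \ref{proof:conv_ring}.
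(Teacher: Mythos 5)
Your proposal is correct and follows the paper's own route: the theorem is obtained exactly as you describe, by juxtaposing the sufficient condition $I_{\min}=\omega(\log n)$ from \eqref{eq:iminring}--\eqref{eq:imaxring} with the matching necessary condition in \eqref{eq:ringconv}, with all the substantive work residing in the converse argument of Appendix \ref{proof:conv_ring}. Your observation that the $\log n$ versus $n-I_{\max}\to\infty$ gap is what prevents any optimality claim in $I_{\max}$ is also consistent with the paper's discussion.
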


\subsection{Detecting Disk in Two-Dimensional Lattice Network}\label{sec:disk}
\begin{figure}
\begin{center}
\includegraphics[width=6.5cm]{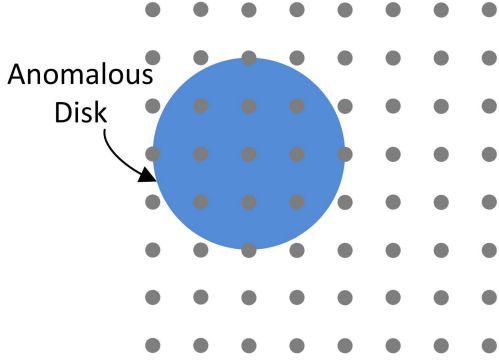}
\caption{Two-dimensional lattice network with an anomalous disk}\label{fig:disk}
\end{center}
\end{figure}
We consider a two-dimensional lattice network (see Figure \ref{fig:disk}) consisting of  $n^2$ nodes placed at the corner points of a lattice. Consider the following set of candidate anomalous disks with each disk centered at a certain node with integer radius:
\begin{flalign}
 \mathcal D_n^{(a)}=\{D: D_{\min}\leq |D|\leq D_{\max}\},
\end{flalign}
where $|D|$ denotes the number of nodes within the disk $D$, $D_{\min}:=\min_{D\in\mathcal D_n^{(a)}}|D|$ and $D_{\max}:=\max_{D\in\mathcal D_n^{(a)}}|D|$.
The goal is to detect the existence of an anomalous disk over the  lattice network.
Towards this end, we build the following test:
\begin{flalign}\label{eq:test3}
\max_{D:D\in \mathcal D_n^{(a)}} \mmd^2_{u,D}(Y_D,Y_{\overline{D}})
\begin{cases}
\ge t, \quad &\text{determine } H_1 \\
< t, &  \text{determine } H_0
\end{cases}
\end{flalign}
where $Y_D$ contains samples within the disk $D$, and $Y_{\overline D}$ contains samples outside the disk $D$.
If we apply this test with a bounded kernel, then the type I error can be bounded as follows:
  \begin{flalign}
    P(H_1|H_0)\leq \exp{\left(3\log n-\frac{2t^2\min\{D_{\min}(n^2-D_{\min}),D_{\max}(n^2-D_{\max})\}}{16n^2K^2}\right)},
  \end{flalign}
 and the type II error can be bounded as follows:
  \begin{flalign}
    P(H_0|H_{1,D})\leq \exp\left({-\frac{(\mmd^2[p,q]-t)^2|D|(n^2-|D|)}{8n^2K^2}}\right) \text{ for }D\in\mathcal D^{(a)},
  \end{flalign}
  where $t$ is the threshold of the test that satisfies $t< \mmd^2[p,q]$.

It can be further shown that if $D_{\min}$ and $D_{\max}$ satisfy the following conditions:
  \begin{flalign}
    D_{\min}&\geq\frac{24K^2(1+\eta)}{t^2}\log n, \label{eq:dmin}\\
    D_{\max}&\leq n^2-\frac{24K^2(1+\eta)}{t^2}\log n,\label{eq:dmax}
  \end{flalign}
where $\eta$ is any positive constant, then the test \eqref{eq:test3} is consistent. Interestingly, the largest disk within a two-dimensional lattice network has radius to be $\frac{n}{2}$ and areas to be $\frac{\pi n^2}{4}\approx 0.79 n^2$, which contains at most $cn^2$ nodes with $c<1$ for large $n$. This implies that  the bound on $D_{\max}$ in \eqref{eq:dmax} is satisfied automatically when $n$ is large.

Hence, for large $n$, \eqref{eq:dmin} implies that the test \eqref{eq:test3} is {\em universally consistent} for any arbitrary $p$ and $q$, if
 \begin{flalign}
    D_{\min}=\omega(\log n).
  \end{flalign}
Furthermore, following the arguments similar to those for the line network, it can be shown that any test must satisfy the following necessary condition required on $D_{\min}$ in order to be {\em universally consistent} for arbitrary $p$ and $q$:
\begin{flalign}
    D_{\min}=\omega(\log n).
  \end{flalign}
Therefore, combining the above sufficient and necessary conditions, we conclude the following optimality property for the proposed test.
\begin{theorem}[Optimality]
Consider the problem of nonparametric detection of a disk over two-dimensional lattice network. The MMD-based test \eqref{eq:test3} is order-level optimal in the size of disks required to guarantee universal test consistency for arbitrary $p$
and $q$.
\end{theorem}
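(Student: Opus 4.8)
The statement asserts order-level optimality in $D_{\min}$, which by definition requires matching the achievability (sufficient) and converse (necessary) conditions on $D_{\min}$. The achievability half is already in hand: the type I and type II error bounds preceding the theorem, together with the sufficient conditions \eqref{eq:dmin} and \eqref{eq:dmax}, show that $D_{\min}=\omega(\log n)$ guarantees universal consistency, while the constraint \eqref{eq:dmax} on $D_{\max}$ is satisfied automatically for large $n$ because the largest admissible disk covers only $\approx 0.79\,n^2 < n^2$ nodes. Consequently the optimality concerns $D_{\min}$ alone, and the only substantive task is the converse: to show that $D_{\min}=\omega(\log n)$ is \emph{necessary} for any test to be universally consistent, which I would establish by adapting the argument of Theorem \ref{thm:conv}.

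First I would reduce the minimax risk to a Bayes risk over a tractable sub-family of hypotheses. Fix a maximal packing of the $n\times n$ lattice by \emph{disjoint} disks of size exactly $D_{\min}$; a volume argument yields $M=\Theta(n^2/D_{\min})$ such disks. Restricting the alternative to this sub-family and placing a uniform prior over which disk is anomalous can only decrease the worst-case risk, so it suffices to lower bound the Bayes risk of this reduced problem. For the distributions I would take Gaussians $p=\mathcal N(0,1)$ and $q=\mathcal N(\mu,1)$ with a mean shift $\mu$ chosen by the adversary; since the reduced problem is parametric, its Bayes-optimal test is the likelihood-ratio test, and it suffices to show that even this optimal test has risk bounded away from zero when $D_{\min}$ fails to grow faster than $\log n$.

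The heart of the converse is then a sparse Gaussian-mean detection calculation. Each candidate disk contributes $D_{\min}$ i.i.d.\ samples, so the likelihood ratio for an active disk has log-magnitude governed by $\mu^2 D_{\min}$. Because the chosen disks are disjoint, these likelihood ratios are independent across disks, and a $\chi^2$-divergence bound on the Bayes risk reduces to controlling $\frac{1}{M}\bigl(e^{\mu^2 D_{\min}}-1\bigr)$: the risk stays bounded below unless $e^{\mu^2 D_{\min}}=\Omega(M)$, i.e.\ $\mu^2 D_{\min}=\Omega(\log M)=\Omega(\log n)$. Since $\mu$ is a fixed constant chosen by the adversary, any scaling $D_{\min}=O(\log n)$ can be defeated by taking $\mu$ small enough that $\mu^2 D_{\min}$ falls below this threshold, so no single test is consistent for all $(p,q)$; hence universal consistency forces $D_{\min}=\omega(\log n)$.

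Combining the two halves, the sufficient condition $D_{\min}=\omega(\log n)$ and the necessary condition $D_{\min}=\omega(\log n)$ coincide at the order level, which is precisely the claimed optimality. I expect the main obstacle to be the converse rather than the combination: specifically, the geometric counting of disjoint disks in the two-dimensional lattice (replacing the one-dimensional interval count used for Theorem \ref{thm:conv}) and the care needed in the $\chi^2$-divergence bound to ensure the $\log M=\Theta(\log n)$ factor is captured correctly, so that the necessary threshold on $\mu^2 D_{\min}$ matches the $\log n$ scaling appearing in the sufficient condition \eqref{eq:dmin}.
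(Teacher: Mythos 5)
Your proposal is correct, and its skeleton is the same as the paper's: the achievability side is read off from \eqref{eq:dmin} (with \eqref{eq:dmax} vacuous for large $n$ since the largest disk covers only about $0.79\,n^2$ nodes), and the converse is obtained by lower-bounding the minimax risk by a Bayes risk over a restricted family of Gaussian alternatives $p=\mathcal N(0,1)$, $q=\mathcal N(\mu,1)$ and invoking the second-moment bound $R_b^*\geq 1-\frac{1}{2}\sqrt{\mathbb{E}e^{\mu^2 Z}-1}$ of \eqref{eq:lala}, with the adversary shrinking $\mu$ to defeat any $D_{\min}=O(\log n)$. The paper itself only gestures at the disk converse (``following the arguments similar to those for the line network''), and a literal transcription of Appendix \ref{proof:conv} would require computing the distribution of the overlap $Z=|S\cap S'|$ for two independently drawn disks --- an unpleasant geometric calculation in two dimensions. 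Your substitution of a maximal \emph{disjoint} packing of $M=\Theta(n^2/D_{\min})$ disks of size $D_{\min}$ is a genuine and worthwhile simplification: it makes $Z$ two-valued, so that $\mathbb{E}e^{\mu^2 Z}-1=\frac{1}{M}\bigl(e^{\mu^2 D_{\min}}-1\bigr)$ exactly, and the threshold $\mu^2 D_{\min}=\Theta(\log M)=\Theta(\log n)$ falls out immediately; the price is a smaller hypothesis class, but since restricting the alternative only weakens the lower bound and the resulting necessary condition still matches \eqref{eq:dmin} at order level, nothing is lost. The quantifier structure of your final step (the test is fixed before $\mu$ is chosen, so universal consistency forces $D_{\min}=\omega(\log n)$) is exactly the one the paper uses for the line network.
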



%

\subsection{Detecting Rectangle in Lattice Networks}

We consider a r-dimensional lattice network consisting of $n^r$ nodes placed at the corner points of a lattice network. Consider the following set of candidate anomalous rectangles:
\[S_n^{(a)}:=\{S=[I_1\times I_2\times \ldots\times I_r]: S_{\min}\leq |S|\leq S_{\max}\},\]
where $I_i$ for $1\leq i\leq r$ denotes an interval contained in  $[1,n]$ with consecutive indices,
$|S|$ denotes the number of nodes in the rectangle $S$, $S_{\min}:=\min_{S\in\mathcal S_n^{(a)}}|S|$, and $S_{\max}:=\max_{S\in\mathcal S_n^{(a)}}|S|$.
The goal is to detect existence of an anomalous r-dimensional rectangle.
Towards this end, we build the following test,
\begin{flalign}\label{eq:test4}
\max_{S:S\in \mathcal S_n^{(a)}} \mmd^2_{u,S}(Y_S,Y_{\overline{S}})
\begin{cases}
\ge t, \quad &\text{determine } H_1 \\
< t, &  \text{determine } H_0
\end{cases}
\end{flalign}
where $Y_S$ contains samples within the rectangular $S$, and $Y_{\overline S}$ contains samples outside the rectangular $S$. If we apply this test with a bounded kernel, then the type I error is bounded as follows:
  \begin{flalign}
    P(H_1|H_0)\leq \exp{\left(2r\log n-\frac{2t^2\min\{S_{\min}(n^r-S_{\min}),S_{\max}(n^r-S_{\max})\}}{16n^rK^2}\right)},
  \end{flalign}
and the type II error is bounded as follows:
  \begin{flalign}
    P(H_0|H_{1,S})\leq \exp{\left(-\frac{(\mmd^2[p,q]-t)^2|S|(n^r-|S|)}{8n^rK^2}\right)}, \text{ for }S\in\mathcal S^{(a)}
  \end{flalign}
  where $t$ is the threshold of the test that satisfies $t< \mmd^2[p,q]$.

It can be further shown that if $S_{\min}$ and $S_{\max}$  satisfy the following conditions:
  \begin{flalign}
    S_{\min}&\geq\frac{16rK^2(1+\eta)}{t^2}\log n \label{eq:smin}\\
    S_{\max}&\leq n^r-\frac{16rK^2(1+\eta)}{t^2}\log n,\label{eq:smax}
  \end{flalign}
where $\eta$ is any positive constant, then the test in \eqref{eq:test4} is consistent.

We here note an important fact that as long as the largest anomalous rectangle is not the entire lattice network, it can at most contain $n^r-n^{r-1}$ nodes, which satisfies the condition \eqref{eq:smax} for large $n$ as well as the following condition
\begin{flalign}\label{eq:smax1}
n^r-S_{\max}\rightarrow\infty \; \text{ as } n\rightarrow \infty.
\end{flalign}
Consequently, \eqref{eq:smax} and \eqref{eq:smax1} are equivalent, both requiring the largest anomalous rectangle not to be the entire network. Thus, the conditions \eqref{eq:smin} and \eqref{eq:smax1} imply that the test \eqref{eq:test4} is {\em universally consistent} for any arbitrary $p$ and $q$, if
 \begin{flalign}
    S_{\min}=\omega(\log n), \quad \text{and}\quad n^r-S_{\max}&\rightarrow\infty \text{ as }n\rightarrow \infty.
  \end{flalign}



Furthermore, following the arguments similar to those for the line network, it can be shown that any test must satisfy the following necessary conditions required on $S_{\min}$ and $S_{\max}$ in order to be {\em universally consistent} for arbitrary $p$ and $q$:
\begin{flalign}
    S_{\min}=\omega(\log n),\quad \text{and}\quad n^r-S_{\max}&\rightarrow\infty \text{ as }n\rightarrow \infty.
  \end{flalign}
Therefore, combining the above sufficient and necessary conditions, we conclude the following optimality property for the proposed test.
\begin{theorem}[Optimality]
Consider the problem of nonparametric detection of a rectangle over a lattice network. The MMD-based test \eqref{eq:test4} is order-level optimal to guarantee universal test consistency for arbitrary $p$ and $q$.
\end{theorem}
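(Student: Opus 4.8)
The plan is to establish optimality by showing that the conditions guaranteeing universal consistency of the MMD-based test \eqref{eq:test4} coincide, at the order level, with the conditions that any universally consistent test must obey. The achievability (sufficient) direction is already encoded in the type I and type II error bounds displayed above, while the converse (necessary) direction follows the reduction argument used for the line network in Theorem \ref{thm:conv}.

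For the sufficient half, I would read off the scaling conditions directly from the two error bounds. In the type I bound the exponent is $2r\log n$ minus a term proportional to $\min\{S_{\min}(n^r-S_{\min}),\,S_{\max}(n^r-S_{\max})\}/n^r$, where the $2r\log n$ term is the logarithm of the number of candidate rectangles produced by a union bound over $\mathcal S_n^{(a)}$. Since $S_{\min}(n^r-S_{\min})/n^r\approx S_{\min}$ and $S_{\max}(n^r-S_{\max})/n^r\approx n^r-S_{\max}$ in the relevant regimes, forcing this exponent to diverge to $-\infty$ yields exactly \eqref{eq:smin} and \eqref{eq:smax}. The type II bound vanishes as soon as $|S|(n^r-|S|)/n^r\to\infty$, which is implied by $S_{\min}\to\infty$ together with $n^r-S_{\max}\to\infty$. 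I would then invoke the geometric observation that any proper sub-rectangle of the lattice omits at least $n^{r-1}$ nodes, so that \eqref{eq:smax} holds automatically for large $n$ and is equivalent to \eqref{eq:smax1}. Collecting these facts, the test is universally consistent whenever $S_{\min}=\omega(\log n)$ and $n^r-S_{\max}\to\infty$.

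For the necessary half, I would mirror the proof of Theorem \ref{thm:conv}: lower bound the minimax risk by the Bayes risk of a reduced two-point problem in which $p$ and $q$ are Gaussian with a fixed mean shift, and argue that even the optimal likelihood-ratio test for this parametric problem fails to be consistent once $S_{\min}=O(\log n)$ or $n^r-S_{\max}$ stays bounded. Because a universally consistent nonparametric test must succeed for every pair $(p,q)$, including this Gaussian pair, no such test can exist when those conditions are violated; hence $S_{\min}=\omega(\log n)$ and $n^r-S_{\max}\to\infty$ are necessary. Comparing the two halves, the sufficient and necessary conditions match at the order level in both $S_{\min}$ and $S_{\max}$, which is precisely the claimed optimality.

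The step I expect to demand the most care is the dimension-dependent bookkeeping of the union-bound factor. The number of candidate rectangles in an $r$-dimensional lattice scales like $n^{2r}$, since each of the $r$ coordinate intervals contributes an $O(n^2)$ choice of endpoints, and it is this count that produces the $2r\log n$ term that must be dominated by the concentration exponent. Verifying that this $\Theta(\log n)$ factor carries the same order as the $\log n$ arising on the Gaussian converse side is what upgrades the comparison from a qualitative match to an order-level one, and so is the crux of the optimality claim.
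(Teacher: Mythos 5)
Your proposal matches the paper's own argument: the paper likewise obtains the sufficient conditions $S_{\min}=\omega(\log n)$ and $n^r-S_{\max}\to\infty$ by reading off the exponents in the stated type I and type II bounds (with the union-bound factor $e^{2r\log n}$ and the observation that any proper sub-rectangle omits at least $n^{r-1}$ nodes, making \eqref{eq:smax} automatic and equivalent to \eqref{eq:smax1}), and establishes necessity by transplanting the Gaussian minimax-to-Bayes reduction of Theorem \ref{thm:conv}. Both halves and the final order-level comparison are essentially identical to the paper's treatment, so no further commentary is needed.
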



\section{Numerical Results}\label{sec:num}
In this section, we provide numerical results to demonstrate the performance of our tests and compare our approach with other competitive approaches.

\begin{figure}
\begin{minipage}[t]{0.45\linewidth}
\centering
  \includegraphics[width=2.5in]{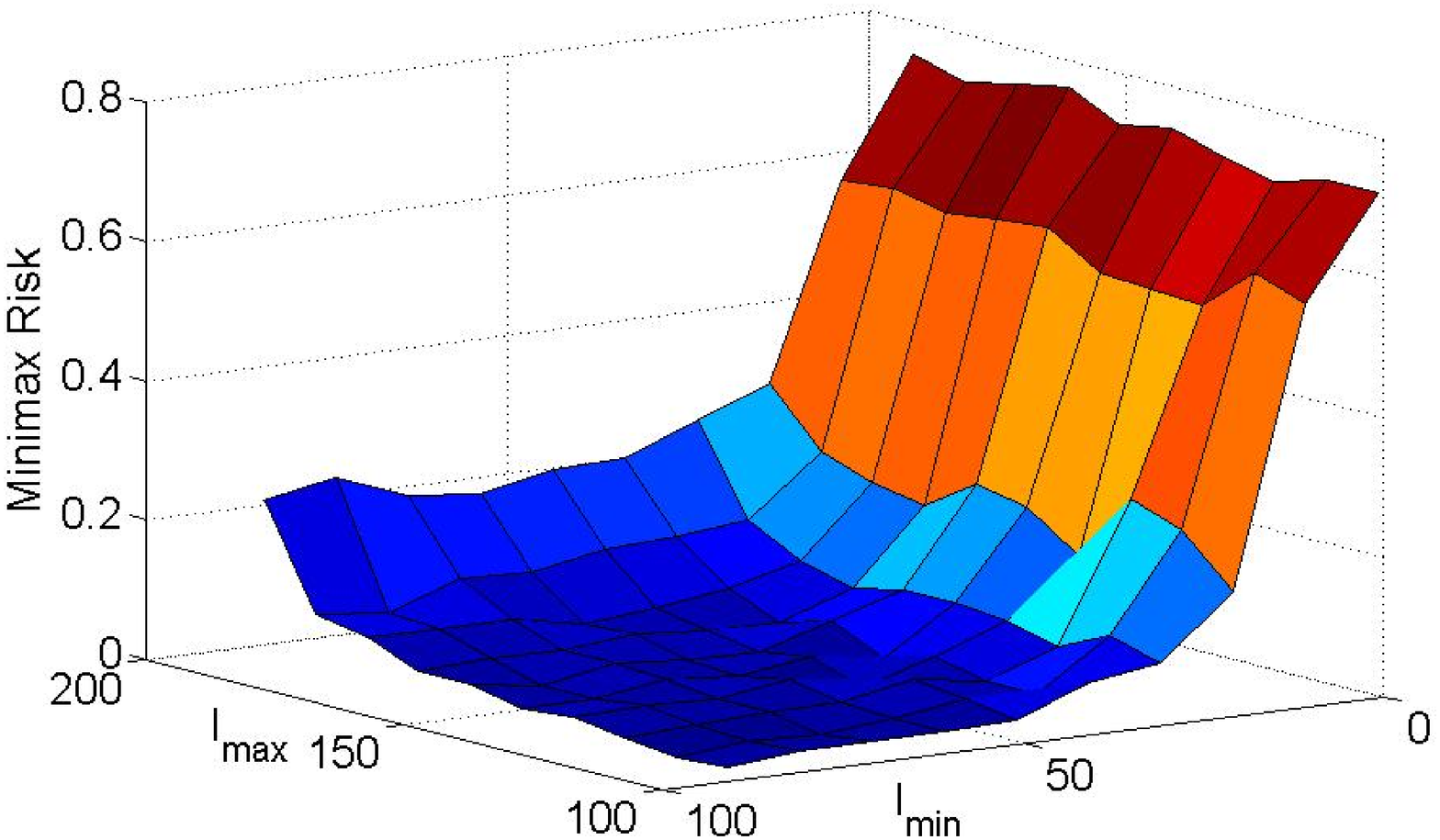}\\
  \caption{Minimax risk for a line network}\label{fig:line_smlt}
\end{minipage}
\hfill
\begin{minipage}[t]{0.45\linewidth}
\centering
  \includegraphics[width=2.5in]{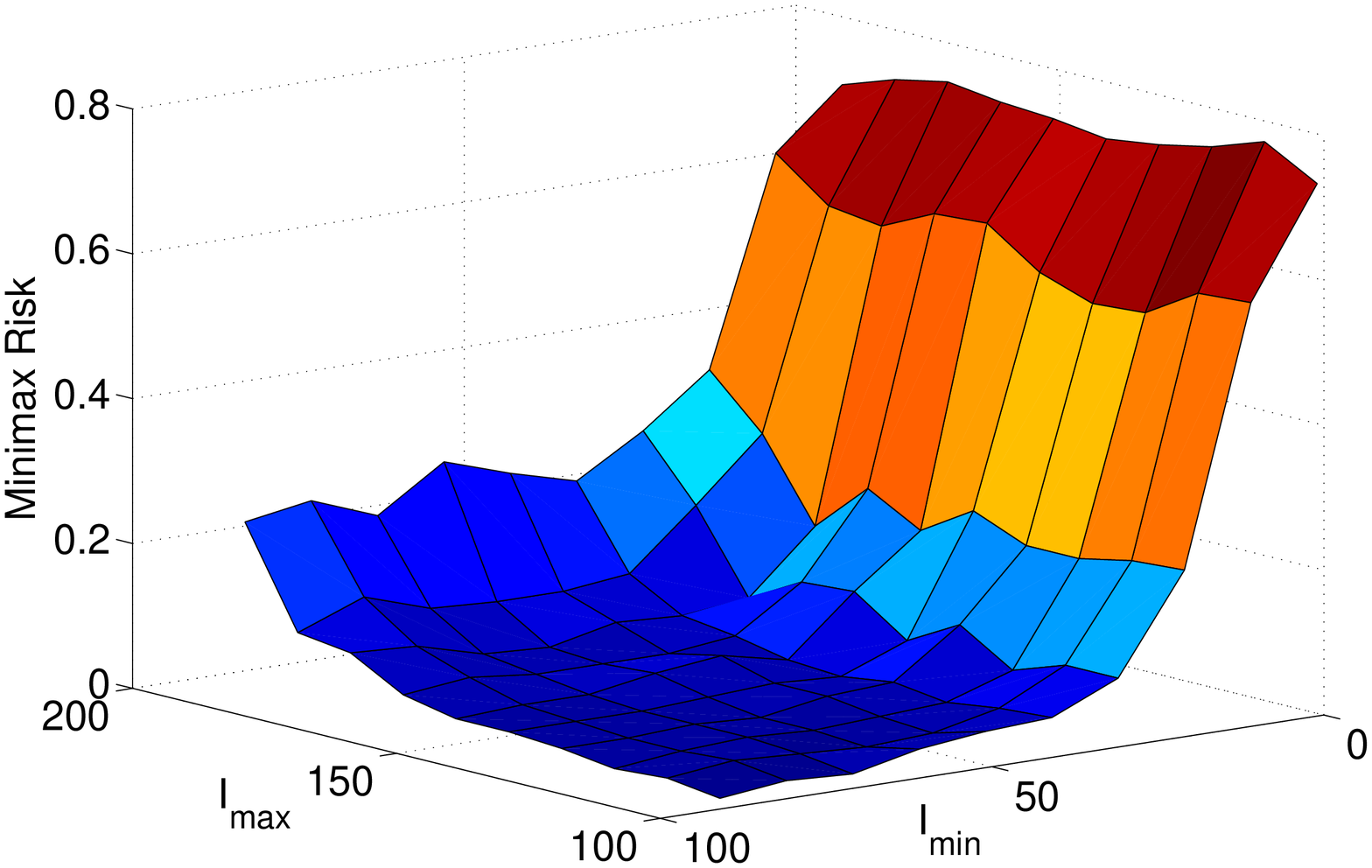}\\
  \caption{Minimax risk for a ring network}\label{fig:ring_smlt}
  \end{minipage}
\end{figure}
\begin{table}[htb]
  \centering
  \caption{Minimax risk for a line network}\label{table:1}
  \begin{tabular}{c|c|c|c|c|c}
    \hline
    $I_{\min}\setminus I_{\max}$ &100    &110   &130   &160    &190\\
    \hline
    1   &0.73   &0.73   &0.72  &0.76    &0.76\\
    11  &0.58   &0.61   &0.55   &0.59   &0.59\\
    31  &0.09   &0.11   &0.10   &0.09   &0.27\\
    61  &0.03   &0.03   &0.05   &0.06   &0.21\\
    91  &0.02   &0.02   &0.04   &0.05   &0.24\\
    \hline
  \end{tabular}
\end{table}
\begin{table}[htb]
  \centering
  \caption{Minimax risk for a ring network}\label{table:2}
  \begin{tabular}{c|c|c|c|c|c}
    \hline
    $I_{\min}\setminus I_{\max}$ &100    &110   &130   &160    &190\\
    \hline
    1   &0.73   &0.78   &0.74  &0.74    &0.71\\
    11  &0.58   &0.61   &0.53   &0.60   &0.63\\
    31  &0.09   &0.10   &0.12   &0.12   &0.27\\
    61  &0.03   &0.03   &0.03   &0.05   &0.27\\
    91  &0.02   &0.03   &0.03   &0.05   &0.24\\
    \hline
  \end{tabular}
\end{table}

In the first experiment, we apply our test to the line and ring networks.
We set the network size $n=200$, the distribution $p$ to be Gaussian with mean zero and variance one, and the anomalous distribution $q$ to be Gaussian
 with mean one and variance one. We use Gaussian kernel with $\sigma=1$. In Figures \ref{fig:line_smlt} and \ref{fig:ring_smlt}, we plot the minimax risk (normalized by 2) for line and ring networks as functions of $I_{\min}$ and $I_{\max}$. For further illustration, we also  list some values of the two risk functions in Tables \ref{table:1} and \ref{table:2}.
It can be seen from Tables  \ref{table:1} and \ref{table:2}, and Figures \ref{fig:line_smlt} and \ref{fig:ring_smlt} that the risk functions decrease as $I_{\min}$ increases and as $I_{\max}$ decreases. This is reasonable because as $I_{\min}$ increases and as $I_{\max}$ decreases, the number of candidate anomalous intervals decreases, which reduces   the difficulty of detection. The minimum numbers of samples inside and outside the anomalous interval also increase, respectively, which provide more accurate information about the distributions.

In the next experiment, we compare the performance of our test with other competitive tests including the student t-test, the Smirnov test \cite{Fried1979}, the Wolf test \cite{Fried1979}, Hall test \cite{Hall2002}, kernel-based KFDA test \cite{Harchaoui2008} and kernel-based KDR test \cite{Sugiyama2012}.
We consider a line network with the network size $n=100$. We set the distribution $p$ to be Gaussian with zero mean and variance $2$, and set the anomalous distribution $q$ to be a mixture of Gaussian distributions with equal probability taking $\mathcal N(-1,1)$ and $\mathcal N(1,1)$. Hence, distributions $p$ and $q$ have the same mean and variance.
\begin{table}
  \centering
  \caption{Comparison of nonparametric approaches over a line network}\label{table:3}
  \begin{tabular}{cc|c|c|c|c}
    \hline
    $(I_{\min}, I_{\max})$  &t-test &Smirnov    &KFDA   &KDR        &MMD \\
    \hline
    (10,95)                 &0.90   &0.92       &0.70   &0.66       &\textbf{0.66}\\
    (10,50)                 &0.88   &0.90       &0.51   &0.56       &\textbf{0.55}\\
    (45,95)                 &0.89   &0.93       &0.54   &0.43       &\textbf{0.43}\\
    (45,50)                 &0.83   &0.62       &0.06   &0.06       &\textbf{0.05}  \\
    \hline
  \end{tabular}
\end{table}

In Table \ref{table:3}, we list some values of the risk function of our MMD-based test and other nonparametric tests with respect to various values  of $I_{\min}$ and $I_{\max}$.
It can be seen that the student t-test fails, because the test relies on difference in  mean and variance to distinguish two distributions, which are the same in our experiment. The Smirnov test    estimates the cumulative distribution function (CDF) first and then takes the maximum difference of the two cumulative distribution functions as the test statistics. For continuous distributions, accurately estimating the CDF from samples requires a large amount of data, which is not feasible  in our experiment.  For the three kernel-based tests KFDA, KDR and MMD, the performance are very close. In particular, for large enough $I_{\min}$  and small enough $I_{\max}$, the kernel-based tests yield small risk. Among these three kernel-based tests, MMD has a slightly better performance. In terms of the computational complexity, KFDA is much higher than KDR and MMD-based tests.
\section{Conclusion}\label{sec:con}

We have studied the nonparametric   problem of detecting the existence of an anomalous structure  over networks, in which both the typical and the anomalous distributions can be arbitrary and unknown.
We have developed  nonparametric tests  using the MMD to measure the distance between the mean embeddings of distributions into an RKHS.
We have analyzed the performance guarantee of our tests, and characterized the sufficient conditions on the minimum and maximum sizes of candidate anomalous structures to guarantee the consistency of our tests. We have further derived the necessary conditions  and showed that our tests are order-level optimal and nearly optimal respectively in terms of the minimum and maximum sizes of candidate structures.
We believe that the MMD-based approach can be applied to various detection problems involving distinguishing among distributions.

\hspace{3mm}

\appendix

\noindent {\Large \textbf{Appendix}}

\section{Proof of Theorem \ref{thm:achiv}}\label{app:line}
We first introduce the McDiarmid's inequality which is useful in bounding the probability of error in our proof.
\begin{lemma}[McDiarmid's Inequality]\label{mcdiarmid}
Let $f:\mathcal{X}^m\rightarrow \mathbb{R}$ be a function such that for all $i\in\{1,\ldots,m\}$, there exist $c_i<\infty$ for which
\begin{equation}\label{eq:mcc}
\underset{X\in\mathcal{X}^m, \tilde{x}\in \mathcal X}{sup}|f(x_1,\ldots,x_m)-f(x_1,\ldots x_{i-1},\tilde x,x_{i+1},\ldots,x_m)|\leq c_i.
\end{equation}
Then for any probability measure $P_X$ over $m$ independent random variables $X:=(X_1\ldots,X_m)$, and every $\epsilon>0$,
\begin{equation}
P_X\bigg(f(X)-E_X(f(X))>\epsilon\bigg)<\exp\left(-\frac{2\epsilon^2}{\sum_{i=1}^mc_i^2}\right),
\end{equation}
where $E_X$ denotes the expectation over $P_X$.
\end{lemma}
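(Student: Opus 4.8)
The plan is to prove this via the classical Doob martingale (bounded-differences) argument combined with the exponential Chernoff bound. First I would construct the Doob martingale associated with $f(X)$ by setting $Z_i := \mathbb{E}[f(X)\mid X_1,\ldots,X_i]$ for $i=0,1,\ldots,m$, so that $Z_0 = \mathbb{E}[f(X)]$ and $Z_m = f(X)$. The martingale differences $D_i := Z_i - Z_{i-1}$ then telescope to give $f(X) - \mathbb{E}[f(X)] = \sum_{i=1}^m D_i$, and by the tower property $\mathbb{E}[D_i \mid X_1,\ldots,X_{i-1}] = 0$.

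The key step is to show that, conditioned on $X_1,\ldots,X_{i-1}$, each difference $D_i$ lies in an interval of width at most $c_i$. Because the coordinates $X_j$ are independent, $Z_i$ can be written as the integral of $f$ over the remaining coordinates $X_{i+1},\ldots,X_m$ with $X_i$ held fixed, while $Z_{i-1}$ is that same integral averaged additionally over $X_i$. Introducing the conditional supremum and infimum over the value assigned to $X_i$, I would invoke the bounded-differences hypothesis \eqref{eq:mcc} to conclude that the conditional range of $D_i$ is at most $c_i$. This permits an application of Hoeffding's lemma: for any zero-mean random variable confined to an interval of length $\ell$, the moment generating function is bounded by $\exp(s^2\ell^2/8)$, so that $\mathbb{E}[e^{s D_i}\mid X_1,\ldots,X_{i-1}] \le \exp(s^2 c_i^2/8)$ for every $s>0$.

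I would then bootstrap through the filtration: writing the full exponential moment $\mathbb{E}[\exp(s\sum_{i=1}^m D_i)]$ as nested conditional expectations and peeling off the innermost factor first, the per-coordinate bound together with the tower property yields $\mathbb{E}[\exp(s(f(X)-\mathbb{E}[f(X)]))] \le \exp\!\big(s^2\sum_{i=1}^m c_i^2/8\big)$. Finally, Markov's inequality applied to the exponentiated deviation gives $P(f(X)-\mathbb{E}[f(X)]>\epsilon) \le \exp\!\big(-s\epsilon + s^2\sum_{i=1}^m c_i^2/8\big)$ for all $s>0$, and minimizing this quadratic in $s$ at $s = 4\epsilon/\sum_{i=1}^m c_i^2$ produces the claimed bound $\exp\!\big(-2\epsilon^2/\sum_{i=1}^m c_i^2\big)$.

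The main obstacle is the second step, namely controlling the conditional range of each martingale difference by $c_i$. This hinges on using independence of the coordinates to interchange conditioning and integration, so that replacing the value of $X_i$ by any two alternatives perturbs $Z_i$ by no more than the single-coordinate oscillation bounded in \eqref{eq:mcc}; once this range estimate is in hand the rest is routine. Hoeffding's lemma itself is a short convexity computation that I would either cite or record as a one-line sublemma.
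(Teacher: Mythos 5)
Your proof is correct, and it is the canonical argument for this result. Note, however, that the paper does not prove this lemma at all: it is stated in Appendix A purely as an imported tool (McDiarmid's classical bounded-differences inequality), so there is no in-paper proof to compare against. Your route --- the Doob martingale $Z_i = \mathbb{E}[f(X)\mid X_1,\ldots,X_i]$, the conditional-range bound $U_i - L_i \le c_i$ via independence of the coordinates, Hoeffding's lemma on each increment, the tower-property peeling of the conditional moment generating functions, and the Chernoff optimization at $s = 4\epsilon/\sum_{i=1}^m c_i^2$ --- is exactly McDiarmid's original proof, and every step you sketch goes through: in particular your identification of the one delicate point (that independence lets you write $Z_i$ as an integral over the remaining coordinates with $X_i$ fixed, so that \eqref{eq:mcc} transfers to the conditional increments) is the right one. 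One cosmetic remark: the Chernoff argument yields the bound with ``$\leq$'' rather than the strict ``$<$'' appearing in the lemma as stated; this is a harmless quirk of the paper's transcription of the inequality (McDiarmid's standard statement uses $\leq$), and it is immaterial to every use the paper makes of the lemma, since it is only applied to show that error probabilities vanish asymptotically.
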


We now derive bounds on $P(H_1|H_0)$ and $P(H_0|H_{1,I})$ for the test \eqref{eq:test}. We first have
\begin{flalign}
  \mmd^2_{u,I}(Y_I,Y_{\bar{I}})=&\frac{1}{|I|(|I|-1)}\sum_{i\in I}\sum_{\substack{j\neq i\\j\in I}}k(y_i,y_j)+\frac{1}{(n-|I|)(n-|I|-1)}\sum_{i\notin I}\sum_{\substack{j\neq i\\j\notin I}}k(y_i,y_j)\nn\\
  &-\frac{2}{|I|(n-|I|)}\sum_{i\in I}\sum_{j\notin I}k(y_i,y_j).
\end{flalign}
Under $H_0$, all samples are generated from distribution $p$. Hence, $\mathbb{E}[\mmd^2_{u,I}(Y_I,Y_{\bar{I}})]=0$.

In order to apply the McDiarmid's inequality to bound the error probabilities $P(H_1|H_0)$ and $P(H_0|H_{1,I})$, we evaluate the following quantities. There are $n$ variables that affects the value of  $\mmd^2_{u,I}(Y_I,Y_{\bar{I}})$. We study the influence of these     $n$ variables on $\mmd^2_{u,I}(Y_I,Y_{\bar{I}})$ in the following two cases.
For $i\in I$, change of $y_i$ affects $\mmd^2_{u,I}(Y_I,Y_{\bar{I}})$ through the following terms,
\begin{flalign}
  \frac{2}{|I|(|I|-1)}\sum_{\substack{j\neq i\\j\in I}}k(y_i,y_j)-\frac{2}{|I|(n-|I|)}\sum_{j\notin I}k(y_i,y_j).
\end{flalign}
For $i\notin I$, change of $y_i$ affects $\mmd^2_{u,I}(Y_I,Y_{\bar{I}})$ through the following terms,
\begin{flalign}
  \frac{2}{(n-|I|)(n-|I|-1)}\sum_{\substack{j\neq i\\j\notin I}}k(y_i,y_j)-\frac{2}{|I|(n-|I|)}\sum_{j\in I}k(y_i,y_j).
\end{flalign}
Since the kernel we use is bounded, i.e., $0\leq k(x,y)\leq K$ for any $x$, $y$, we have that for $i\in I$, $c_i=\frac{4K}{|I|}$, and for $i\notin I$, $c_i=\frac{4K}{n-|I|}$, where $c_i$ serves the role as in \eqref{eq:mcc}.

Therefore, by applying McDiarmid's inequality, we obtain
\begin{flalign}
  P_{H_0}( \mmd^2_{u,I}(Y_I,Y_{\bar{I}})>t)
  \leq \exp{(-\frac{2t^2|I|(n-|I|)}{16nK^2})}.
\end{flalign}
Hence,
\begin{flalign}\label{eq:pp}
  P(H_1|H_0)&=P_{H_0}\left(\max_{I:I\in \cI_n^{(a)}} \mmd^2_{u,I}(Y_I,Y_{\bar{I}})>t\right)\nn\\
  &\overset{(a)}{\leq} \sum_{I:I\in \cI_n^{(a)}} P_{H_0}( \mmd^2_{u,I}(Y_I,Y_{\bar{I}})>t)\nn\\
  &\leq\sum_{I:I\in \cI_n^{(a)}}\exp{(-\frac{2t^2|I|(n-|I|)}{16nK^2})}\nn\\
  &=\sum_{I:I_{\min}\leq |I|\leq I_{\max}}\exp{(-\frac{2t^2|I|(n-|I|)}{16nK^2})}\nn\\
  &\overset{(b)}{=}\sum_{I:I_{\min}\leq |I|\leq n-\frac{16K^2(1+\eta)}{t^2}\log n}\exp{(-\frac{2t^2|I|(n-|I|)}{16nK^2})} \nn\\
&\quad  +\sum_{I:n-\frac{16K^2(1+\eta)}{t^2}\log n+1\leq |I|\leq I_{\max}}\exp{(-\frac{2t^2|I|(n-|I|)}{16nK^2})}
\end{flalign}
where (a) is due to Boole's inequality,  $\eta$
in (b) is a positive constant, and the second term in (b) is equal to zero if $n-\frac{16K^2(1+\eta)}{t^2}\log n+1\geq I_{\max}$.

It can be shown that if $I_{\min} \geq \frac{16K^2(1+\eta)}{t^2}\log n$, then
the first term in \eqref{eq:pp} can be bounded as follows,
\begin{flalign}\label{eq:ppp}
  \sum_{I:I_{\min}\leq |I|\leq n-\frac{16K^2(1+\eta)}{t^2}\log n}&\exp{(-\frac{2t^2|I|(n-|I|)}{16nK^2})}\nn\\
  &\overset{(a)}{\leq}n^2\exp{(-\frac{2t^2|I|(n-|I|)}{16nK^2})}\mid _{|I|=\frac{16K^2(1+\eta)}{t^2}\log n}\nn\\
  &=\exp{(-2\eta\log n+o(n))}\rightarrow 0, \text{ as } n\rightarrow \infty,
\end{flalign}
where (a) is due to the fact that there are at most $n^2$ number of candidate anomalous intervals contributing to the sum, and $|I|(n-|I|)$ is minimized by the value $|I|=\frac{16K^2(1+\eta)}{t^2}\log n$ within the range of $|I|$.

We next bound the second term in \eqref{eq:pp}.
\begin{flalign}
  &\sum_{n-\frac{16K^2(1+\eta)}{t^2}\log n+1\leq |I|\leq I_{\max}}\exp{\left(-\frac{2t^2|I|(n-|I|)}{16nK^2}\right)} \label{eq:482} \\
  &=\sum_{n-\frac{16K^2(1+\eta)}{t^2}\log n+1\leq |I|\leq n-\frac{16K^2(1+\eta)}{t^2}\log\log n}\exp{\left(-\frac{2t^2|I|(n-|I|)}{16nK^2}\right)}\label{eq:485}\\
  &\quad+\sum_{n-\frac{16K^2(1+\eta)}{t^2}\log\log n+1\leq |I|\leq I_{\max}}\exp{\left(-\frac{2t^2|I|(n-|I|)}{16nK^2}\right)}\nn\\
  &\leq \left(\frac{16K^2(1+\eta)}{t^2}\log n\right)^2\exp\left(-\frac{2t^2(n-\frac{16K^2(1+\eta)}{t^2}\log\log n)\frac{16K^2(1+\eta)}{t^2}\log\log n}{16nK^2}\right)\nn\\
  &\quad+\sum_{n-\frac{16K^2(1+\eta)}{t^2}\log\log n+1\leq |I|\leq I_{\max}}\exp{\left(-\frac{2t^2|I|(n-|I|)}{16nK^2}\right)},\label{eq:pp11}
\end{flalign}
where the first term in \eqref{eq:pp11} converges to zero as $n$ goes to infinity. The second term in \eqref{eq:pp11} can be bounded as
\begin{flalign}
  \sum_{n-\frac{16K^2(1+\eta)}{t^2}\log\log n+1\leq |I|\leq I_{\max}}&\exp{\left(-\frac{2t^2|I|(n-|I|)}{16nK^2}\right)}\nn\\
&\leq \left(\frac{16K^2(1+\eta)}{t^2}\log\log n \right)^2\exp \left( -\frac{2t^2I_{\max}(n-I_{\max})}{16nK^2}\right)\nn\\
\end{flalign}
which converges to zero as  $n\rightarrow \infty$ if
\begin{flalign}\label{eq:imax}
  I_{\max}\leq n-\frac{16K^2(1+\eta)}{t^2}\log\log\log n.
\end{flalign}
In fact, the condition \eqref{eq:imax} can be further relaxed by decomposing the second term in \eqref{eq:pp11} following  the steps similar to \eqref{eq:485} and \eqref{eq:pp11}. Such a procedure can be repeated for arbitrary finite times, say $k-2$ times, and it can be shown  that \eqref{eq:482} converges to zero  as $n\rightarrow \infty$ if
\begin{flalign}
  I_{\max}\leq n-\frac{16K^2(1+\eta)}{t^2}\underset{\text{arbitrary $k$ number of }\log}{\underbrace{\log\cdots\log\log}n}.
\end{flalign}

Therefore, we conclude  that the type I error, i.e., $P(H_1|H_0)$, converges to zero as $n\rightarrow \infty$ if the following conditions are satisfied:
\begin{flalign}\label{eq:51}
  I_{\min}&\geq\frac{16K^2(1+\eta)}{t^2}\log n\nn\\
  I_{\max}&\leq n-\frac{16K^2(1+\eta)}{t^2}\underset{\text{arbitrary $k$ number of }\log}{\underbrace{\log\cdots\log\log}n}
\end{flalign}
for any positive integer $k $.

We next continue to  bound the type II error $\underset{I\in \mathcal I_n^{(a)}}{\max}P(H_0|H_{1,I})$ as follows.
\begin{flalign}
  \underset{I\in \mathcal I_n^{(a)}}{\max}&P(H_0|H_{1,I})\nn\\
  &=\underset{I\in \mathcal I_n^{(a)}}{\max}P_{H_{1,I}}\left(\underset{I'\in\mathcal I_n^{(a)}}\max\mmd^2_{u,I'}(Y_{I'},Y_{\bar{I'}})<t\right)\nn\\
  &\overset{(a)}\leq\underset{I\in \mathcal I_n^{(a)}}{\max}P_{H_{1,I}}(\mmd^2_{u,I}(Y_{I},Y_{\bar{I}})<t)\nn\\
  &=\underset{I\in \mathcal I_n^{(a)}}{\max}P_{H_{1,I}}(\mmd^2[p,q]-\mmd^2_{u,I}(Y_{I},Y_{\bar{I}})>\mmd^2[p,q]-t)\nn\\
  &\overset{(b)}{\leq} \underset{I\in \mathcal I_n^{(a)}}{\max} \exp{\left(-\frac{(\mmd^2[p,q]-t)^2|I|(n-|I|)}{8K^2n}\right)}
\end{flalign}
where (a) holds by taking $I'=I$, and (b) holds by  applying McDiarmid's inequality.
It can be easily checked that under the condition \eqref{eq:51},
\begin{flalign}
  &\underset{I\in \mathcal I_n^{(a)}}{\max}P(H_0|H_{1,I}) \nn\\
  &\quad\quad\leq \exp{\left(-\frac{(\mmd^2[p,q]-t)^2|I|(n-|I|)}{8K^2n}\right)}\Big\vert_{|I|=n-\frac{16K^2(1+\eta)}{t^2}\underset{\text{arbitrary $k$ number of }\log}{\underbrace{\log\cdots\log\log}n}}\nn\\
&\quad\quad\rightarrow 0, \text{ as } n\rightarrow \infty.
\end{flalign}

Therefore, we conclude that the condition \eqref{eq:51} guarantees that $R_m^{(n)}\rightarrow 0$ as $n\rightarrow \infty$, and thus guarantees the consistency of the test \eqref{eq:test}.
\section{Proof of Theorem \ref{thm:conv}}\label{proof:conv}
The idea  is to consider the following  problem which has lower risk than our original problem, and show that  there exist distributions (in fact for Gaussian $p$ and $q$), under which  such a risk is bounded away from zero for all tests if the necessary conditions are not satisfied.

First consider the following  problem, in which all  candidate anomalous intervals have the same length $k$, and hence there are in total $n-k+1$ candidate anomalous intervals. Furthermore, suppose the distribution $p$ is Gaussian with mean zero and variance one, and the distribution $q$ is Gaussian with mean $\mu>0$ and variance one.
We define the minimax risk of a test for such a problem as follows:
\begin{flalign}
  R_m(k)=P(H_1|H_0)+\underset{|I|=k}{\max}P(H_0|H_{1,I}),
\end{flalign}
and we denote the minimum minimax risk as $R_m^*(k)$.
We further assign uniform distribution over all candidate anomalous intervals  under the alternative hypothesis $H_1$, i.e., each candidate anomalous interval has the same probability $\frac{1}{n-k+1}$ to occur. Thus the Bayes risk is given by
\begin{flalign}
    R_b=P(H_1|H_0)+\frac{1}{n-k+1}\sum_{|I|=k}P(H_0|H_{1,I}) ,
\end{flalign}
and we use $R_b^*$ to denote the minimum Bayes risk over all possible tests.
It is clear that
\[ R_m^*(k)\geq R_b^*.\]

It is justified in \cite{Arias2008} that the optimal  Bayes risk $R_b^*$ can be lower bounded as follows.
\begin{flalign}\label{eq:lala}
  R_b^*\geq 1-\frac{1}{2}\sqrt{\mathbb{E}e^{\mu^2Z}-1},
\end{flalign}
where $Z=|S\bigcap S'|$ with $S$ and $S'$ being  independently and uniformly drawn at random from all candidate anomalous intervals.

We next characterize the distribution of the random variable $Z$  in order to evaluate the lower bound.
We are interested only in the case with $k<\frac{n}{2}$.
It can be shown that
\begin{flalign}
  P(Z=i)&=\frac{2(n-2k+1+i)}{(n-k+1)^2}, \text{ for }1\leq i\leq k-1\nn\\
  P(Z=k)&=\frac{1}{n-k+1}\nn\\
  P(Z=0)&=1-\sum_{i=1}^{k-1}\frac{2(n-2k+1+i)}{(n-k+1)^2}-\frac{1}{n-k+1}.
\end{flalign}

Based on the above distribution of $Z$, we obtain
\begin{flalign}\label{eq:emuz}
  \mathbb{E}e^{\mu^2Z}&-1\nn\\
  =&\sum_{j=1}^{k-1}\frac{2(n-2k+1+j)e^{\mu^2j}}{(n-k+1)^2}+\frac{e^{\mu^2k}}{n-k+1}+1-\frac{2(k-1)(n-\frac{3}{2}k+1)+n-k+1}{(n-k+1)^2}-1\nn\\
  =&\frac{2(n-2k+1)}{(n-k+1)^2}\sum_{j=1}^{k-1}e^{\mu^2j}+\frac{2}{(n-k+1)^2}\sum_{j=1}^{k-1}je^{\mu^2j}+\frac{e^{\mu^2k}}{n-k+1}\nn\\
  &-\frac{2(k-1)(n-\frac{3}{2}k+1)+n-k+1}{(n-k+1)^2}\nn\\
  \overset{(a)}{\leq}&\frac{2(n-2k+1)}{(n-k+1)^2}\int_1^k e^{\mu^2x}dx+\frac{2}{(n-k+1)^2}\int_1^k xe^{\mu^2x}dx +\frac{e^{\mu^2k}}{n-k+1}\nn\\
  &-\frac{2(k-1)(n-\frac{3}{2}k+1)+n-k+1}{(n-k+1)^2}\nn\\
  =&\frac{2(n-2k+1)}{(n-k+1)^2}(e^{\mu^2k}-e^{\mu^2})+\frac{2}{(n-k+1)^2}(\frac{1}{\mu^2}ke^{\mu^2k}-\frac{1}{\mu^4}e^{\mu^2k}-\frac{1}{\mu^2}e^{\mu^2}+\frac{1}{\mu^4}e^{\mu^2})\nn\\
  & +\frac{e^{\mu^2k}}{n-k+1}-\frac{2(k-1)(n-\frac{3}{2}k+1)+n-k+1}{(n-k+1)^2}.
\end{flalign}

It can be checked that
if $k\leq \frac{1}{2\mu^2}\log n$, \eqref{eq:emuz} converges to zero as $n$ goes to infinity. Hence,
$R_b^*\geq 1$ as $n$ goes to infinity, which further implies that $R_m^*(k)>1$, as $n\rightarrow \infty$, and thus any  test is no better than random guess.
Since $\mu$ can be any constant, there always exists Gaussian $p$ and $q$ such that no test can be consistent as long as $k\leq c\log n$, where $c$ is any constant.



Now consider the original problem with the minimax risk
\begin{flalign}
  R_m=P(H_1|H_0)+\underset{I_{\min}\leq|I|\leq I_{\max}}{\max}P(H_0|H_{1,I}).
\end{flalign}
It can be shown that
\[R_m^*\geq R^*(k),\text{ if }k=I_{\min}\]
where $R_m^*$ denotes the minimum risk over all possible tests. Based on the above argument on $R^*(k)$, it is clear that if $I_{\min}\leq c\log n$,  there exists no test such that $R_m^*$ converges to zero as $n$ goes to infinity for  arbitrary distributions $p$ and $q$.

Furthermore, consider the case with only one candidate anomalous interval $I$ with length $k$.
The risk in this case is
\begin{flalign}
  R(k)=P(H_1|H_0)+P(H_0|H_{1,I})
\end{flalign}
where $|I|=k$. It is also clear that $R_m^*\geq R^*(k)$ where $k=I_{\max}$. For such a simple case, the problem reduces to the two-sample problem, detecting whether the samples in the interval $I$ and the samples outside of the interval $I$ are generated from the same distribution. In order to guarantee $R^*(k)\rightarrow 0$ as $n\rightarrow \infty$,
$k$ and $n-k$  should both scale with $n$ to infinity.
Thus, in order to guarantee $R_m^*\rightarrow 0$, as $n\rightarrow\infty$, $n-I_{\max}\rightarrow \infty$ is necessary for any test to be universally consistent.
This concludes the proof.

\section{Proof of Sufficient Conditions for Ring Networks}\label{proof:ring}
Following steps similar to those in Appendix \ref{app:line}, we derive the following bound
\begin{flalign}
  P(H_1|H_0)&\leq \sum_{I\in \mathcal I_n^{(a)}} \exp{\left(-\frac{2t^2|I|(n-|I|)}{16nK^2}\right)}\nn\\
  &\overset{(a)}{=}\sum_{i=I_{\min}}^{I_{\max}}n\exp{\left(-\frac{2t^2i(n-i)}{16nK^2}\right)}\nn\\
  &\overset{(b)}{\leq} \sum_{i=I_{\min}}^{I_{\max}}n\exp{\left(-\frac{2t^2\min\{I_{\min}(n-I_{\min}),I_{\max}(n-I_{\max})\}}{16nK^2}\right)}\nn\\
  &\overset{(c)}{\leq} n^2\exp{\left(-\frac{2t^2\min\{I_{\min}(n-I_{\min}),I_{\max}(n-I_{\max})\}}{16nK^2}\right)}\nn\\
  &=\exp{\left(2\log n-\frac{2t^2\min\{I_{\min}(n-I_{\min}),I_{\max}(n-I_{\max})\}}{16nK^2}\right)}
\end{flalign}
where (a) is due to the fact in the ring network,  there are $n$   candidate anomalous intervals with size $i$, (b) is due to the fact that $i(n-i)$ is lowered bounded by $\min\{I_{\min}(n-I_{\min}),I_{\max}(n-I_{\max})\}$, and (c) is due to the fact that $I_{\max}-I_{\min}\leq n$.

It can be checked that $P(H_1|H_0)\rightarrow 0$ as $n\rightarrow \infty$ if
\begin{flalign}
  \frac{16K^2(1+\eta)}{t^2}\log n\leq I_{\min}\leq I_{\max}\leq n-\frac{16K^2(1+\eta)}{t^2}\log n.
\end{flalign}

Furthermore, following steps similar to those in Appendix \ref{app:line}, we can derive an upper bound on the type II error and show that it converges to zero as $n\rightarrow \infty$ if
\begin{flalign}
  I_{\min}\rightarrow \infty,   \quad n-I_{\max}\rightarrow \infty,
\end{flalign}

Combining the two conditions completes the proof.

\section{Proof of Necessary Conditions for Ring Networks}\label{proof:conv_ring}
The proof  follows the idea in Appendix \ref{proof:conv} for the line network. Here, we consider  a problem
in which all the candidate anomalous intervals have the same length $k$, i.e., there are in total $n-k+1$ candidate anomalous intervals. Furthermore, suppose the distribution $p$ is Gaussian with mean zero and variance one, and the distribution $q$ is Gaussian with mean $\mu>0$ and variance one.
We define the minimax risk of a test for such a problem as follows:
\begin{flalign}
  R_m(k)=P(H_1|H_0)+\underset{|I|=k}{\max}P(H_0|H_{1,I}),
\end{flalign}
and we denote the minimum minimax risk as $R^*(k)$.
We further assign uniform distribution over all candidate anomalous intervals under the alternative hypothesis $H_1$, i.e., each candidate anomalous interval has the same probability $\frac{1}{n-k+1}$ to be anomalous. Hence, the Bayes risk is given by
\begin{flalign}
    R_b=P(H_1|H_0)+\frac{1}{n}\sum_{|I|=k}P(H_0|H_{1,I}) ,
\end{flalign}
and we use $R_b^*$ to denote the minimum Bayes risk over all possible tests.
It is clear that
\[R_k^*\geq R_b^*.\]

In order to apply the same property in \eqref{eq:lala}, we characterize the distribution of the random variable $Z$  as follows.
\begin{flalign}
  P(Z=i)&=\frac{2}{n}, \text{ for }i=1,\ldots,k-1,\nn\\
  P(Z=k)&=\frac{1}{n},\nn\\
  P(Z=0)&=\frac{n-2k+1}{n}.
\end{flalign}
Then we have
\begin{flalign}
  \mathbb{E}e^{\mu^2Z}&-1\nn\\
  =&\sum_{i=1}^{k-1}\frac{2}{n}e^{\mu^2i}+\frac{1}{2}e^{\mu^2k}+\frac{n-2k+1}{n}-1\nn\\
  \leq& \frac{2}{n\mu^2}e^{\mu^2k}-\frac{2}{n}e^{\mu^2}+\frac{1}{n}e^{\mu^2k}-\frac{2k-1}{n}\nn\\
  \rightarrow & 0, \text{ if } k\leq \mathcal O(\log n).
\end{flalign}

By  arguments similar to those for the line network, we conclude that $I_{\min}>c\log n$ for any constant $c$ and $n-I_{\max}\rightarrow \infty$ as $n\rightarrow \infty$ are necessary to guarantee any  test to be universally consistent for arbitrary distributions $p$ and $q$.

\bibliography{kernelmean}
\bibliographystyle{IEEEtran}

\renewcommand{\baselinestretch}{1}

\end{document}